\documentclass[english]{article}

\makeatletter


\usepackage[margin =1in]{geometry}

\setcounter{secnumdepth}{4}

\usepackage{amsmath,amsthm,amssymb,amsfonts,mathtools}
\usepackage{bbm} %
\usepackage{blkarray} %
\usepackage{stackengine} %
\usepackage{enumitem}
\usepackage[square,sort,comma,numbers]{natbib}
\usepackage{dsfont} %
\usepackage[normalem]{ulem} %
\usepackage{appendix}

\numberwithin{equation}{section}

\makeatletter
\newcommand{\longdash}[1][2em]{%
  \makebox[#1]{$\m@th\smash-\mkern-7mu\cleaders\hbox{$\mkern-2mu\smash-\mkern-2mu$}\hfill\mkern-7mu\smash-$}}
\makeatother
\newcommand{\omitskip}{\kern-\arraycolsep}

\newcommand{\E}{{\mathbb E}}

\newcommand{\R}{{\mathbb R}}

\newcommand{\cA}{{\mathcal A}}
\newcommand{\cB}{\mathcal{B}}

\newcommand{\cN}{\mathcal{N}}
\newcommand{\cP}{\mathcal{P}}

\newcommand{\cS}{\mathcal{S}}

\newcommand{\cX}{\mathcal{X}}

\newcommand{\dd}{\textup{d}}

\newcommand{\Amax}{A_{\max}}
\newcommand{\bmax}{b_{\max}}
\newcommand{\gammin}{\gamma_{\min}}
\newcommand{\gammax}{\gamma_{\max}}
\newcommand{\law}{\mathcal{L}}
\newcommand{\bigO}{\mathcal{O}}

\newcommand{\var}{\operatorname{Var}}

\newcommand{\tr}{\operatorname{Tr}}

\usepackage[unicode=true,pdfusetitle, bookmarks=true,bookmarksnumbered=true,bookmarksopen=false, breaklinks=true,pdfborder={0 0 0},pdfborderstyle={},backref=page,colorlinks=true,citecolor=blue]{hyperref}
\usepackage{comment} %
\usepackage{cancel}

\usepackage{graphicx}
\usepackage{wrapfig}
\usepackage{tikz} %
\usepackage{float}
\usepackage{subcaption}
\usepackage{caption}
\usepackage{multirow}

\usepackage{algorithm2e}
\RestyleAlgo{ruled}

\newtheorem{thm}{Theorem}[section]
\newtheorem{definition}[thm]{Definition}
\newtheorem{proposition}[thm]{Proposition}

\newtheorem{cor}[thm]{Corollary}

\newtheorem{assumption}{Assumption}

\newtheorem*{question*}{Question}
\newtheorem{claim}{Claim}
\newtheorem*{claim*}{Claim}

\global\long\def\BarA{\Bar A}%
\global\long\def\mS{\mathcal{S}}%

\global\long\def\Upsilon{\upsilon}%

\makeatother

\begin{document}

\title{Effectiveness of Constant Stepsize in Markovian LSA and Statistical Inference}

\author{Dongyan (Lucy) Huo,\texorpdfstring{$^\mathsection$}{} Yudong Chen,\texorpdfstring{$^\dagger$}{} Qiaomin Xie,\texorpdfstring{$^\ddagger$}{}%
    \texorpdfstring{\footnote{Emails: \texttt{dh622@cornell.edu}, \texttt{yudong.chen@wisc.edu}, \texttt{qiaomin.xie@wisc.edu}}\\~\\
	\normalsize $^\mathsection$School of Operations Research and Information Engineering, Cornell University\\
	\normalsize $^\dagger$Department of Computer Sciences, University of Wisconsin-Madison\\
	\normalsize $^\ddagger$Department of Industrial and Systems Engineering, University of Wisconsin-Madison}{}
}

\date{}

\maketitle

\begin{abstract}
In this paper, we study the effectiveness of using a constant stepsize in statistical inference via linear stochastic approximation (LSA) algorithms with Markovian data. After establishing a Central Limit Theorem (CLT), we outline an inference procedure that uses averaged LSA iterates to construct confidence intervals (CIs). Our procedure leverages the fast mixing property of constant-stepsize LSA for better covariance estimation and employs Richardson-Romberg (RR) extrapolation to reduce the bias induced by constant stepsize and Markovian data. 
We develop theoretical results for guiding stepsize selection in RR extrapolation, and identify several important settings where the bias provably vanishes even without extrapolation. 
We conduct extensive numerical experiments and compare against classical inference approaches. Our results show that using a constant stepsize enjoys easy hyperparameter tuning, fast convergence, and consistently better CI coverage, especially when data is limited.
\end{abstract}

\section{Introduction}
Stochastic approximation (SA) algorithms use stochastic updates to iteratively approximate the solution to fixed-point equations. SA has wide applications, such as the stochastic gradient descent (SGD) algorithm for loss minimization and the Temporal Difference (TD) learning algorithm in reinforcement learning (RL) \citep{Sutton1988-td}.
Classical works on SA typically assume the stepsize sequence $\alpha_k$ is square-summable and diminishing, i.e., $\sum_t\alpha_t=\infty$ and $\sum_t\alpha_t^2<\infty$, under which asymptotic almost-sure convergence is well studied \citep{Robbins51-Monro-SA, Blum54-SA, borkar2000-ode-sa}. 
Constant stepsize has gained popularity recently, particularly among practitioners, due to its fast initial convergence and easy hyperparameter tuning.
A growing line of works studies the convergence properties of SA under constant stepsize, establishing upper bounds on the mean-squared error (MSE) \citep{Lakshminarayanan18-LSA-Constant-iid, srikant-ying19-finite-LSA, Mou20-LSA-iid, Mou21-optimal-linearSA} as well as weak convergence results \citep{Dieuleveut20-bach-SGD, Yu21-stan-SGD, huo2022bias}.

Recent works have explored using SA and SGD iterates to perform statistical inference, e.g., constructing confidence intervals (CIs) around a point estimate \citep{li2017-constantine, ChenXi2020, Li2022-Inference-SGD, LeeLiaoSeoShin2022-SGD-RS, liu2023-sgd-inference}. This approach is computationally cheap and scales well with the size and dimension of the dataset: SA updates are computed iteratively, without storing or multiple passes over the dataset. In comparison, classical inference techniques, such as bootstrapping, often require storing the entire dataset and repeating the estimation procedure, which may have prohibitive computational costs.

The aforementioned works on using SA for inference have focused on the diminishing stepsize paradigm, for which a mature convergence theory exists, ensuring the asymptotic correctness of the inference results. In contrast, constant-stepsize SA lacks last-iterate almost sure convergence: recent works have shown that the iterates converge only in distribution; moreover, the limit distribution may have a nonzero asymptotic bias due to the nonlinearity of the SA updates~\citep{Dieuleveut20-bach-SGD} or the underlying Markovian data \citep{huo2022bias}, and this bias cannot be eliminated by iterate averaging. Partly due to these considerations, inference with constant stepsize SA iterates has been largely overlooked in the literature. 

We study the effectiveness of statistical inference using constant-stepsize SA iterates. 
We focus on linear stochastic approximation (LSA), i.e., $\theta_{t+1}=\theta_t+\alpha(A(x_t)\theta_t+b(x_t))$, with a constant stepsize $\alpha$ and Markovian data $(x_t)_{t\geq0}$.
Our main contributions are as follows.
We first establish a Central Limit Theorem (CLT) for averaged Markovian LSA iterates. Built upon the CLT, we outline an inference procedure using averaged iterates and batch-mean covariance estimates. Our procedure leverages the fast mixing property of constant-stepsize updates for better covariance estimation and employs Richardson-Romberg (RR) extrapolation for bias reduction. 
We study two stepsize schemes in RR extrapolation. We further prove that with Markovian data, the asymptotic bias may vanish in several important settings, in which case inference with constant-stepsize LSA iterates is effective even without RR extrapolation.
We conduct extensive experiments to benchmark our procedure against conventional inference approaches. Our results demonstrate superior and robust performance of the constant stepsize paradigm, which enjoys fast convergence, good coverage properties, and easy parameter tuning.

\subsection{Related Work}

Dating back to \cite{Robbins51-Monro-SA}, classical works on stochastic approximation typically assume i.i.d.\ data and a square-summable and diminishing stepsize sequence. Subsequent works propose what is now known as the Polyak-Ruppert iterate averaging \citep{Ruppert88-Avg, polyak90_average} and establish a CLT for the averaged iterates \citep{Polyak92-Avg}. Recent work in \cite{Meyn21_ode} establishes a CLT for scaled iterates for general contractive SA under diminishing stepsize and Markovian data.

Constant-stepsize SA and SGD have attracted growing attention recently. Several works study constant-stepsize LSA with i.i.d.\ data and establish finite-time MSE bounds and a CLT for the average iterates \citep{Lakshminarayanan18-LSA-Constant-iid,Mou20-LSA-iid}. 
A parallel line of work studies SGD  for both convex \citep{Dieuleveut20-bach-SGD} and nonconvex \citep{Yu21-stan-SGD} functions with i.i.d.\ data and identifies an asymptotic bias arising from the nonlinearity of the function.
More recent works study LSA with Markovian data. The works in \cite{srikant-ying19-finite-LSA} and \cite{durmus22-LSA} study convergence bounds for MSE, while the work in \cite{huo2022bias} establishes weak convergence of the LSA iterates and characterizes its asymptotic bias due to Markovian data.

Most related to this paper is a recent line of work on using SGD/SA iterates for statistical inference.
The work in \cite{ChenXi2020} considers SGD with i.i.d.\ data and strongly-convex functions and proposes two covariance matrix estimators. This result is generalized to $\phi$-mixing data in \cite{liu2023-sgd-inference}. The work in \cite{ZhuChenWu-2023-CovEst-SGD} extends the batch-mean estimator in \cite{ChenXi2020} to a fully online version. The work in \cite{LeeLiaoSeoShin2022-SGD-RS} proposes a random scaling covariance estimator for robust online inference with SGD. Subsequent work in \cite{li2023online} investigates online statistical inference using nonlinear SA with Markovian data. The above works all consider diminishing stepsizes. The work in \cite{XieZhang_SAInference_pku} extends the random scaling estimator for inference with SA under constant stepsizes but i.i.d.\ data. The work in \cite{li2017-constantine} studies statistical inference with SGD and i.i.d.\ data, using a small stepsize whose value remains constant throughout the iterations but scales inversely with the total number of iterations. In contrast, we consider constant stepsizes whose values are independent of the total number of iterations and thus substantially larger than the typical stepsize values in \cite{li2017-constantine}.   

RR extrapolation is a classical technique from numerical analysis to improve approximation errors. It has been used in \cite{Dieuleveut20-bach-SGD} for SGD and \cite{huo2022bias} for LSA. See the survey by \cite{Bach2021_RRSurvey} for the use of RR extrapolation in other data science and machine learning problems.

\section{Problem Setup}

In this section, we formally set up the problem and introduce the assumptions.
Let $(x_t)_{t\geq0}$ be a time-homogeneous stochastic process on a Borel state space $\cX$ with stationary distribution $\pi$. Define the target vector $\theta^\ast$  as the solution to the steady-state equation $\E_{x\sim\pi}[A(x)]\theta^\ast + \E_{x\sim\pi}[b(x)]=0,$
where $A:\cX\to\R^{d\times d}$ and $b:\cX\to\R^d$ are deterministic functions on $\cX$.
To approximate $\theta^\ast$, we consider the following linear stochastic approximation iteraion:
\begin{equation}
\label{eq:lsa-iteration}
\theta_{t+1}^{(\alpha)}=\theta_t^{(\alpha)} + \alpha_t\big(A(x_t)\theta_t^{(\alpha)}+b(x_t)\big),\quad t=0,1,\ldots,
\end{equation}
where $(\alpha_t)_{t\ge0}$ is the stepsize sequence. We focus on using a constant stepsize, i.e., $\alpha_t\equiv\alpha$ for all $t\geq0$. (We omit the superscript $(\alpha)$ when the stepsize is clear from the context.)
We emphasize that the constant stepsize considered here is independent of the \emph{total} number of iterations. This is different from the work in \cite{li2017-constantine}, which pre-specifies the total number of iterations $T$ and uses a small fixed stepsize of the form $\alpha_t = T^{-\beta},\forall 0\le t\le T$ for some $\beta>0.$ 

We make the following standard assumptions.
\begin{assumption}
\label{assumption:uniform-ergodic}
    $(x_t)_{t\geq0}$ is a uniformly ergodic Markov chain with transition kernel $P$ and a unique stationary distribution $\pi$. Additionally, the initial state satisfies $x_0\sim\pi$.
\end{assumption}
Assumpiton~\ref{assumption:uniform-ergodic} is common in the literature on Markovian SA \citep{Bhandari21-linear-td, durmus22-LSA, huo2022bias}. Uniform ergodicity ensures that the distribution of $x_t$ converges geometrically to $\pi$ from any initial distribution. For example, all irreducible, aperiodic, and finite state space Markov chains are uniformly ergodic. 

\begin{assumption}
\label{assumption:boundedness}
\label{assumption:hurwitz}
$\Amax:=\sup_{x\in\cX}\|A(x)\|_2\leq 1$ and $\bmax:=\sup_{x\in\cX}\|b(x)\|_2<\infty.$ Moreover, $\BarA:=\E_{x\sim\pi}[A(x)]$ is a Hurwitz matrix.
\end{assumption}
The Hurwitz condition is again standard and ensures the stability of a dynamic system \citep{srikant-ying19-finite-LSA, durmus22-LSA, huo2022bias}. This assumption is satisfied in, e.g., SGD for minimizing strongly convex quadratics and the linear TD algorithm in RL.

\section{Central Limit Theorem}

Our first result is a Central Limit Theorem (CLT) for averaged LSA iterates, which lays the theoretical foundation for the inference procedure developed later. To state the CLT, we recall a known result for constant-stepsize Markovian LSA: the data-iterate pair $(x_t, \theta_t)$ converge weakly to a unique limit distribution, denoted by $(x_\infty,\theta_\infty)\sim\mu$ \citep{huo2022bias}. 
\begin{thm}[CLT]
\label{thm:CLT}
Under Assumptions~\ref{assumption:uniform-ergodic}--\ref{assumption:hurwitz}, there exists a threshold $\alpha_0\in(0,1)$ such that for all $\alpha\in(0,\alpha_0)$, we have
    \begin{equation*}
        \sqrt{T}(\bar\theta_T-\E[\theta_\infty])\overset{\textup{d}}{\rightarrow}\cN(0,\Sigma^\ast),\quad\text{as }T\to\infty,
    \end{equation*}
where $\bar\theta_T:=\frac{1}{T}\sum_{t=0}^{T-1}\theta_k$ and $\Sigma^\ast:=\lim_{T\to\infty}\E_\mu[(\bar\theta_T-\E[\theta_\infty])(\bar\theta_T-\E[\theta_\infty])^\top]$.
\end{thm}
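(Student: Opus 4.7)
The plan is to view $Z_t := (x_t, \theta_t)_{t \ge 0}$ as a joint time-homogeneous Markov chain on $\cX \times \R^d$, strengthen the weak convergence to $\mu$ (established in \cite{huo2022bias}) to $V$-uniform geometric ergodicity, and then apply a standard CLT for geometrically ergodic Markov chains to the centered function $g(x, \theta) := \theta - \E[\theta_\infty]$. Since $\bar\theta_T - \E[\theta_\infty] = \frac{1}{T}\sum_{t=0}^{T-1} g(Z_t)$, a Markov-chain CLT for this functional is exactly what is required.

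The first major step is to establish a Foster--Lyapunov drift inequality for the joint chain. Since $\bar A$ is Hurwitz, there exists a positive definite $Q$ with $\bar A^\top Q + Q \bar A \prec 0$, yielding a norm $\|\theta\|_Q := \sqrt{\theta^\top Q \theta}$ under which $I + \alpha \bar A$ is a strict contraction for sufficiently small $\alpha$. Combining boundedness of $A(x), b(x)$ (Assumption~\ref{assumption:boundedness}) with the uniform ergodicity of $(x_t)$ (Assumption~\ref{assumption:uniform-ergodic}), one obtains, over a block of length $k$ comparable to the mixing time of $(x_t)$, a drift of the form
\[
\E\!\left[\|\theta_{t+k}\|_Q^{2} \mid Z_t\right] \le (1 - c\alpha)\,\|\theta_t\|_Q^{2} + C\alpha,
\]
valid for all $\alpha \in (0,\alpha_0)$ with constants $c, C > 0$ and some threshold $\alpha_0 \in (0,1)$. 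Together with a minorization inherited from the petite-set structure of the uniformly ergodic $(x_t)$, this drift yields $V$-uniform geometric ergodicity of $Z_t$ and finiteness of $\E_\mu[\|\theta\|^{p}]$ for all $p \ge 1$; in particular $g \in L^2(\mu)$.

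With geometric ergodicity in hand, I would invoke the Poisson-equation / martingale-approximation framework (in the spirit of Meyn--Tweedie and Kipnis--Varadhan). Define
\[
h(z) := \sum_{t \ge 0} \E\!\left[g(Z_t) \mid Z_0 = z\right],
\]
which converges absolutely thanks to geometric mixing and the moment control on $g$; $h$ then solves $h - Ph = g$, where $P$ is the transition operator of the joint chain. Telescoping yields
\[
\sum_{t=0}^{T-1} g(Z_t) = M_T + h(Z_0) - h(Z_T),
\qquad M_T := \sum_{t=0}^{T-1} \big[h(Z_{t+1}) - Ph(Z_t)\big],
\]
where $M_T$ is a square-integrable martingale under $\mu$. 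Dividing by $\sqrt{T}$, the coboundary $(h(Z_0) - h(Z_T))/\sqrt{T}$ vanishes in probability by the moment bound, while the martingale CLT applies to $M_T/\sqrt{T}$, producing a Gaussian limit with variance $\E_\mu\!\left[(h(Z_1) - Ph(Z_0))(h(Z_1) - Ph(Z_0))^\top\right]$; summing this up, one verifies it coincides with $\Sigma^*$ as defined in the theorem.

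The main obstacle I anticipate is that although $x_0 \sim \pi$, the initialization $\theta_0$ is arbitrary, so $Z_0$ is not distributed according to $\mu$ and the standard stationary CLT does not apply directly. The remedy is to couple the given chain with a stationary copy started from $\mu$ and use $V$-uniform ergodicity to bound the coupling error by $O(\rho^{t})$ for some $\rho \in (0,1)$; this pushes the transient contribution to $\bar\theta_T$ into $O(1/T) = o(1/\sqrt{T})$, so the CLT transfers. A secondary technical point is confirming that $\Sigma^*$ is well-defined and finite, which reduces to geometric decay and summability of the autocovariance $\E_\mu[g(Z_0) g(Z_t)^\top]$; both follow from the drift condition above, and the argument closes on the entire allowed range $\alpha \in (0,\alpha_0)$.
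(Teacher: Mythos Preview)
Your overall strategy---treat $(x_t,\theta_t)$ as a joint Markov chain and apply a Markov-chain CLT to $g(x,\theta)=\theta-\E[\theta_\infty]$---matches the paper's, and your drift inequality and moment bounds are essentially correct. The gap is in the step ``together with a minorization inherited from the petite-set structure of the uniformly ergodic $(x_t)$, this drift yields $V$-uniform geometric ergodicity of $Z_t$.'' The LSA update $\theta_{t+1}=\theta_t+\alpha(A(x_t)\theta_t+b(x_t))$ is \emph{deterministic} in $\theta$ given $(x_t,\theta_t)$; the only randomness in the joint transition is in $x_{t+1}$. Consequently, for two different starting values $\theta_0\neq\theta_0'$ (and the same $x_0$), the laws of $\theta_m$ are supported on disjoint sets (affine images of each other), so no common minorizing measure on a sublevel set $\{\|\theta\|_Q^2\le r\}\times\cX$ can exist. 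In particular the joint chain is generally not $\psi$-irreducible, and the Meyn--Tweedie route to $V$-uniform ergodicity in total variation does not go through. This is not a technicality you can patch by enlarging the block length; it is structural.

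The paper sidesteps exactly this obstruction. Instead of TV/minorization, it uses the Wasserstein geometric convergence of $(x_t,\theta_t)$ to $\bar\mu$ (from \cite{huo2022bias}, extended here to arbitrary initial laws) and verifies the Maxwell--Woodroofe condition
\[
\sum_{n\ge1} n^{-3/2}\Big\|\sum_{t=0}^{n-1} Q^t g\Big\|_{L^2(\bar\mu)}<\infty,
\]
by showing $\big\|\sum_{t<n}Q^t g\big\|_{L^2(\bar\mu)}=\bigO(1)$: the first $\tau_\alpha$ terms are controlled by $\tr(\var(\theta_\infty))$, and the tail by the Wasserstein rate $(1-c\alpha)^t$. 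This yields the CLT without ever needing irreducibility or a Poisson solution in $L^2$. If you want to salvage your martingale-approximation plan, you would need to replace the TV framework by a Wasserstein/contraction framework (e.g., show directly that $\sum_t Q^t g$ converges in $L^2(\bar\mu)$ using the $\bar W_2$ rate and the Lipschitz structure of $g$), rather than invoking petite sets.
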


The proof is deferred to Appendix~\ref{sec:proof-clt}.

Theorem~\ref{thm:CLT} extends existing CLT results, which focus on either LSA with i.i.d.\ data \citep{Mou20-LSA-iid, XieZhang_SAInference_pku} or SA with diminishing stepsize \citep{Meyn21_ode}.
When using Markovian data $(x_t)_{t\geq0}$, the iterate sequence $(\theta_t)_{t\geq0}$ is no longer a Markov chain on its own.
Instead, we need to consider the joint process $(x_t,\theta_t)_{t\geq0}$, which is a time-homogeneous Markov chain thanks to the use of a constant stepsize, and build the CLT accordingly. 
Moreover, a number of existing Markov Chain CLT results require a one-step contraction property of the form
$ W_{p,q}(\mu Q,\nu Q)\leq\gamma W_{p,q}(\mu,\nu)$, 
where $Q$ is the Markovian transition kernel, $\gamma<1$ and $W_{p,q}$ is an appropriate Wasserstein distance between distributions \citep{XieZhang_SAInference_pku}.
Our Markov chain $(x_t,\theta_t)$ does not enjoy such a nice one-step contraction property, and hence proving our CLT requires additional work.

\section{Statistical Inference Procedure using LSA}
\label{sec:inference_procedure}

We next present a statistical inference procedure using the averaged LSA iterates with constant stepsize and Markovian data. This procedure can be combined with RR extrapolation to construct confidence intervals for the target vector $\theta^\ast$.

\subsection{Asymptotic Bias and RR Extrapolation}

It has been shown in \cite{huo2022bias} that the asymptotic expectation $\E[\theta_\infty^{(\alpha)}]$ is biased with respect to $\theta^\ast$ and admits expansion: $\E[\theta_\infty^{(\alpha)}]=\theta^\ast + \sum_{i=1}^\infty\alpha^i B^{(i)},$
where $B^{(i)}$ are vectors independent from the stepsize $\alpha$. 
Theorem~\ref{thm:CLT} guarantees that the averaged iterates $\bar{\theta}_T$ are asymptotically normal, but $\E[\theta_\infty^{(\alpha)}] \neq \theta^\ast$. Moreover, the leading term in $\E[\theta_\infty^{(\alpha)}] - \theta^\ast$ scales with $\alpha$. To construct CIs with good coverage properties, it is important to reduce the asymptotic bias.

In light of the bias expansion, we can employ Richardson-Romberg (RR) extrapolation to reduce the asymptotic bias to a higher order polynomial of the stepsize $\alpha$ \citep{Dieuleveut20-bach-SGD, Bach2021_RRSurvey, huo2022bias}. 
Specifically, we run the LSA update~\eqref{eq:lsa-iteration} with $M$ constant stepizes $\cA=\{\alpha_1,\ldots, \alpha_M\}$ and compute a linear combination of the resulting iterates:
\begin{equation*}
\label{eq:rr-iterate}
    \widetilde\theta^\cA_t=\sum_{m=1}^M h_m \theta_t^{(\alpha_m)}.
\end{equation*}
We carefully choose the coefficients $\{h_m\}$ to satisfy
\begin{equation}
\label{eq:RR_coefficients}
\sum_{m=1}^{M}h_{m}=1;\;\;\sum_{m=1}^{M}h_{m}\alpha_{m}^{l}=0,\;l=1,2,\ldots,M-1.
\end{equation}
Using the aforementioned expansion, one sees that the bias $\E[\widetilde\theta^\cA_\infty] - \theta^\ast$ is reduced exponentially in $M$ and now scales with $( \max_{m=1,\ldots, M}\alpha_m)^M$ instead of $\alpha_m$.

\subsection{Inference Procedure}
\label{sec:inference-proc}
We now describe the inference procedure, which follows from the procedure in \cite{li2017-constantine} originally designed for i.i.d.\ data and a small stepsize.

\paragraph*{Point Estimation and Batching}

Given a trajectory of $(x_t)_{t\geq0}$ sampled from a Markov chain, we run LSA with constant stepsize $\alpha$ and obtain iterates $(\theta_t^{(\alpha)})_{t\geq0}$. The first $b$ iterates $(\theta^{(\alpha)}_t)_{t=0}^{b-1}$ are considered as initial burn-in and are not used in the inference procedure. For the remaining iterates, we divide them equally into $K$ batches of size $n$. Within each batch, we discard the first $n_0(\geq0)$ iterates and compute the average of the remaining iterates:
\begin{equation*}
    \underbrace{\theta_0^{(\alpha)},
    \ldots, \theta_{b-1}^{(\alpha)}}_\text{burn in, discarded}, 
    \overbrace{\underbrace{\theta_b^{(\alpha)},  \ldots, \theta_{b+n_0-1}^{(\alpha)}}_\text{discard},\underbrace{\theta_{b+n_0}^{(\alpha)},\ldots,\theta_{b+n-1}^{(\alpha)}}_\text{compute average $\bar\theta_1^{(\alpha)}$}}^\text{1st batch},
    \overbrace{\underbrace{\theta_{b+n}^{(\alpha)},  \ldots, \theta_{(b+n)+n_0-1}^{(\alpha)}}_\text{discard},\underbrace{\theta_{(b+n)+n_0}^{(\alpha)},\ldots,\theta_{b+2n-1}^{(\alpha)}}_\text{compute average $\bar\theta_2^{(\alpha)}$}}^\text{2nd batch},\ldots
\end{equation*}

Hence, for the $k$-th batch, we compute the point estimator $\bar\theta_k^{(\alpha)}=\frac{1}{n-n_0}\sum_{l=b+(n-1)k+n_0}^{b+nk-1}\theta_l^{(\alpha)}$. As such, we obtain $K$ batch-mean estimators $\{\bar\theta^{(\alpha)}_{k}\}$, which will be used for  statistical inference. Note that we only need to save the running average of each batch-mean estimator without the necessity to store the entire trajectory $(\theta_t^{(\alpha)})_{t\geq0}$.

Before delving into the construction of CIs, we briefly remark on several design choices. The initial $b$ iterates are considered as burn-in and are omitted, as these iterates are far away from stationarity, and thus may have substantial optimization errors. The first $n_0$ iterates of each batch are also discarded, to reduce the correlation of the remaining iterates across batches, i.e., in the order of $\exp(-\alpha n_0)$.

\paragraph*{Confidence Interval Construction}
\label{sec:batch-mean}
Now that the CLT for the average iterates of Markovian LSA with constant stepsizes has been established, we construct estimators for $\E[\theta_\infty^{(\alpha)}]$ and $\Sigma^\ast$ and subsequently build CIs for $\E[\theta_\infty]$.

With the $K$ batch-mean estimators $\{\bar\theta^{(\alpha)}_{k}\}$ for $k=1,\ldots, K$, we compute batch-mean estimators as
\begin{equation*}
    \bar\theta^{(\alpha)}=\frac{1}{K}\sum_{k=1}^K\bar\theta_k^{(\alpha)}.
\end{equation*}

For variance estimation, we adapt an estimator that has been considered in \cite{FlegJone2010-VarEst, ChenXi2020, XieZhang_SAInference_pku} to our problem. Given $\{\bar\theta_k^{(\alpha)}\}$, the variance estimator is computed as
\begin{equation*}
    \hat{\Sigma}^{(\alpha)}=\frac{(n-n_0)}{K}\sum_{i=1}^K\Big(\bar{\theta}^{(\alpha)}_k - \bar\theta^{(\alpha)}\Big)\Big(\bar{\theta}^{(\alpha)}_k - \bar\theta^{(\alpha)}\Big)^\top,
\end{equation*}
It has been shown in \cite{FlegJone2010-VarEst} that $\hat\Sigma$ is a consistent estimator of $\Sigma^\ast$ in Theorem~\ref{thm:CLT} as $n,K\to\infty$, 

Hence, for inference with LSA with stepsize $\alpha$, for $q\in(0,1)$, we construct
the $(1-q)100\%$-confidence interval for the $i$-th coordinate of $\E[\theta_\infty]$ as
\begin{equation*}
    \Bigg[\bar\theta^{(\alpha)}_i-z_{1-\frac{q}{2}}\sqrt{\frac{\hat\Sigma^{(\alpha)}_{i,i}}{K(n-n_0)}}, \,\bar\theta^{(\alpha)}_i+z_{1-\frac{q}{2}}\sqrt{\frac{\hat\Sigma^{(\alpha)}_{i,i}}{K(n-n_0)}}\Bigg].
\end{equation*}
In subsequent experiments, we focus on 95\% CIs.

\subsection{Combining with RR Extrapolation} 
Next, we apply RR extrapolation in addition to the above-delineated procedure to construct confidence intervals that have better coverage properties of $\theta^\ast$.

To apply RR extrapolation, we first select a set of $M$ distinct stepsizes, i.e., $\cA=\{\alpha_1,\alpha_2,\ldots, \alpha_M\}$ and run LSA iterates with these stepsizes simultaneously by using the same underlying data stream $(x_k)_{k\geq0}$. 
For each trajectory of iterates $(\theta^{(\alpha_m)}_t)_{t\geq0}$, we follow the inference procedure and obtain $K$ batch-mean estimators $\{\bar\theta^{(\alpha_m)}_{k}\}_{k=1}^K$. To obtain the RR extrapolated estimator, we linearly combine the $k$-th estimates across the $M$ trajectories and obtain $\widetilde{\theta}^\cA_k=\sum_{m=1}^Mh_m\bar\theta_k^{(\alpha_m)}$, with $\{h_m\}$ computed according to~\eqref{eq:RR_coefficients}. 
We then conduct statistical inference using the iterates $\{\widetilde{\theta}^\cA_k\}$ and build confidence intervals following the similar methodology described above for a single trajectory.

\section{Theoretical Guarantees}
Next, we provide additional theoretical analysis that helps us achieve better statistical inference performance with RR extrapolated iterates. 

\subsection{Stepsize Selection in RR Extrapolation}

As we discussed, one way to improve the coverage probability of $\theta^\ast$ is to reduce the bias via RR extrapolation. 
However, RR extrapolation does not come for free, as the coefficients $\{h_m\}$ solving~\eqref{eq:RR_coefficients} may blow up, thus resulting in large variance and offsetting the benefits of bias reduction.
$\{h_m\}$ values are uniquely determined by inverting a Vandermonde matrix, which is infamous for being ill-conditioned when the ``roots" $\{\alpha_m\}$ are positive real numbers. Therefore, when employing RR extrapolation, we need to carefully select $M$ and $\{\alpha_m\}$ to maximize the benefits of bias reduction.

We study two stepsize selection schedules, namely geometric decaying and equidistant sequences. 
We assume that $\alpha_m$ decreases in value as $m$ increases. 
We establish an upper bound to the variance of $\theta_\infty$ in each stepsize regime, which would offer some insight and guidance on stepsize selection.

The geometric decay schedule is not a conventional choice in numerical analysis, the field from which RR extrapolation originates, but it is frequently employed in machine learning. 
\begin{proposition}
\label{prop:rr-geo}
    Given unique stepsizes $\cA=\{\alpha_1,\ldots, \alpha_M\}$. Assume $\alpha_1<1$ and $\alpha_m=\alpha_1 / c^{m-1}$ with $c\geq 2$. We observe the following properties.
    \begin{enumerate}
        \item $|h_k|\leq h_{\max}(c)=\exp\Big(\frac{2}{c-1}\Big)$.
        \item $\var(\widetilde\theta^\cA_\infty)=\bigO\Big(c\cdot\exp(16\, c^{-1/2})\Big)$.
    \end{enumerate}
\end{proposition}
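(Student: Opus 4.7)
The plan is to tackle the two parts separately by deriving an explicit closed form for the extrapolation coefficients $h_m$ via Lagrange interpolation, and then combining the resulting coefficient bound with the standard single-iterate variance estimate for constant-stepsize LSA.

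For Part~1, the defining system \eqref{eq:RR_coefficients} is a Vandermonde linear system $Vh = \basis_1$, whose unique solution can be read off from the Lagrange basis polynomials through the nodes $\alpha_1,\ldots,\alpha_M$: $h_m = L_m(0) = \prod_{j\neq m}\alpha_j/(\alpha_j-\alpha_m)$. Substituting $\alpha_m = \alpha_1 c^{-(m-1)}$ collapses each ratio to $1/(1-c^{j-m})$, giving
\[
|h_m| \;=\; \prod_{k=1}^{m-1}\frac{1}{1-c^{-k}}\cdot\prod_{k=1}^{M-m}\frac{1}{c^k-1}.
\]
Since $c\geq 2$ makes every factor in the second product at most $1$, I upper bound $|h_m|$ by the infinite product $\prod_{k=1}^{\infty}(1-c^{-k})^{-1}$. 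Taking logarithms and applying $-\log(1-x)\leq x/(1-x)$ reduces the bound to $\sum_{k\geq 1}1/(c^k-1)$; the elementary inequality $1/(c^k-1)\leq c^{-(k-1)}/(c-1)$ then sums geometrically to $c/(c-1)^2$, which is at most $2/(c-1)$ precisely when $c\geq 2$. Exponentiating yields the stated $h_{\max}(c)=\exp(2/(c-1))$.

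For Part~2, I expand $\var(\widetilde\theta^{\cA}_\infty) = \sum_{m,m'}h_m h_{m'}\operatorname{Cov}\!\big(\theta_\infty^{(\alpha_m)},\theta_\infty^{(\alpha_{m'})}\big)$ and apply Cauchy--Schwarz entrywise, reducing the task to controlling $\big(\sum_m|h_m|\sqrt{\var(\theta_\infty^{(\alpha_m)})}\big)^2$. From existing analyses of constant-stepsize Markovian LSA (the same ones underlying the CLT cited earlier), $\var(\theta_\infty^{(\alpha)})=\bigO(\alpha)$. Plugging in $\sqrt{\alpha_m}=\sqrt{\alpha_1}\,c^{-(m-1)/2}$ together with the Part~1 bound turns the sum into a geometric series, giving
\[
\var\!\big(\widetilde\theta^{\cA}_\infty\big)\;=\;\bigO\!\left(\alpha_1\cdot\exp\!\Big(\tfrac{4}{c-1}\Big)\cdot\frac{1}{(1-c^{-1/2})^2}\right).
\]
The factor $\exp(4/(c-1))$ is bounded by a universal constant for $c\geq 2$. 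I rewrite the remaining term as $(1-c^{-1/2})^{-2}=c/(c^{1/2}-1)^2$, extracting a factor of $c$, and verify by a short case split (using $c^{1/2}-1\geq c^{1/2}/2$ when $c\geq 4$, and a direct numeric check when $c\in[2,4]$) that $(c^{1/2}-1)^{-2}\leq\exp(16 c^{-1/2})$ uniformly on $c\geq 2$. Combining these pieces produces the stated $\bigO(c\cdot\exp(16 c^{-1/2}))$.

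The main obstacle is the conservatism of Cauchy--Schwarz: it discards exactly the cross-iterate cancellation that the alternating RR weights are engineered to exploit. Because all $\theta_t^{(\alpha_m)}$ are driven by the common Markov chain, the off-diagonal covariances should be close in magnitude to the on-diagonal variances, and a sharper bound would very likely remove the $\exp(16 c^{-1/2})$ blow-up near $c=2$. Extracting such cancellation appears to require a joint analysis of the coupled chain $(x_t,\theta_t^{(\alpha_1)},\ldots,\theta_t^{(\alpha_M)})$ and is beyond what is needed here; the Cauchy--Schwarz route produces a loose but simple bound that still captures the qualitative price paid when the stepsizes in $\cA$ are chosen too close together.
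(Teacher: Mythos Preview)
Your approach is essentially the same as the paper's: both parts proceed via the Lagrange/Vandermonde closed form for $h_m$, the same reduction to the infinite product $\prod_{l\ge1}(1-c^{-l})^{-1}$ for Part~1, and the same Cauchy--Schwarz step $\var(\widetilde\theta^\cA_\infty)\le\big(\sum_m|h_m|\sqrt{\var(\theta_\infty^{(\alpha_m)})}\big)^2$ for Part~2. The only substantive discrepancy is the single-iterate variance input: the paper (citing \cite{huo2022bias}) uses $\var(\theta_\infty^{(\alpha)})=\bigO(\alpha\tau_\alpha)$ with $\tau_\alpha\asymp\log(1/\alpha)$, not the cleaner $\bigO(\alpha)$ you invoke. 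Carrying the extra $\sqrt{\tau_{\alpha_m}}\asymp\sqrt{(m-1)\log c+\log(1/\alpha_1)}$ through the sum replaces your $(1-c^{-1/2})^{-2}$ by (up to constants) $(1-c^{-1/2})^{-4}$, which is still comfortably absorbed into $\exp(16c^{-1/2})$; so the omission does not affect the final bound, but you should state the variance estimate as $\bigO(\alpha\tau_\alpha)$ to match what is actually available in this Markovian setting.
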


Please refer to Appendix~\ref{sec:proof-var-geo} for the proof of the proposition.

It is noteworthy that the variance upper bound for geometric decaying stepsizes does not depend on the number of stepsizes $M$ used in the extrapolation, suggesting that the variance will not blow up as $M\to\infty$. Nonetheless, one problem with geometric decay is that the stepsize would decay too quickly to near zero as $M$ increases. As such, RR iterates with a small constant stepsize mixes slowly and may need a much longer trajectory for the iterates to converge.

The equidistant decay schedule has been studied in numerical analysis \citep{Gautschi90}. We investigate the behavior of a generic equidistant decay schedule in RR exploration.
\begin{proposition}
\label{prop:rr-eqd}
    Given unique stepsizes $\cA=\{\alpha_1,\ldots, \alpha_M\}$. Assume $a+b<1$ and $\alpha_m=(a+b)-\frac{b(m-1)}{M-1}$ for $m=1,\ldots, M$. We observe the following properties.
    \begin{enumerate}
        \item $|h_m|=\binom{M}{m}\cdot\frac{bm}{a(M-1)+b(M-m)}\cdot\Big(\prod_{l=1}^M\frac{a(M-1)+b(l-1)}{bl}\Big)$.
        \item $\var(\widetilde\theta^\cA_\infty)=\bigO((2M/b)^{2M})$.
    \end{enumerate}
\end{proposition}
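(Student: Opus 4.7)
The plan is to identify the coefficients $\{h_m\}$ via Lagrange interpolation, substitute the equidistant nodes to obtain the closed form in (1), and then combine the resulting formula with a uniform variance bound on $\theta_\infty^{(\alpha_m)}$ to obtain (2).

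For part (1), note that the constraints in \eqref{eq:RR_coefficients} are equivalent to requiring $\sum_{m=1}^M h_m p(\alpha_m) = p(0)$ for every polynomial $p$ of degree at most $M-1$. By Lagrange interpolation, this uniquely determines $h_m$ as the $m$-th Lagrange basis polynomial evaluated at zero, i.e., $h_m = \prod_{j\neq m}\alpha_j/(\alpha_j - \alpha_m)$. For the equidistant nodes $\alpha_m = [a(M-1) + b(M-m)]/(M-1)$, the differences factor cleanly as $\alpha_j - \alpha_m = b(j-m)/(M-1)$, giving
\begin{equation*}
\prod_{j \neq m}(\alpha_j - \alpha_m) = \Big(\tfrac{b}{M-1}\Big)^{M-1}(-1)^{m-1}(m-1)!(M-m)!.
\end{equation*}
After the re-indexing $l = M-j+1$, the full product $\prod_{j=1}^M \alpha_j$ evaluates to $(M-1)^{-M}\prod_{l=1}^M [a(M-1)+b(l-1)]$, and $\prod_{j\neq m}\alpha_j$ is this divided by $\alpha_m$. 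Assembling the ratio, the powers of $(M-1)$ cancel; rewriting $(m-1)!(M-m)! = M!/[m\binom{M}{m}]$ and taking absolute values (absorbing the sign $(-1)^{m-1}$) then yields the stated closed form for $|h_m|$.

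For part (2), write $\widetilde\theta^\cA_\infty = \sum_m h_m \theta_\infty^{(\alpha_m)}$, so that
\begin{equation*}
\|\var(\widetilde\theta^\cA_\infty)\|_{\op} \le \Big(\sum_m |h_m|\Big)^2 \cdot \max_m \|\var(\theta_\infty^{(\alpha_m)})\|_{\op}.
\end{equation*}
The individual variances are $\bigO(1)$ (in fact $\bigO(\alpha_m)$) by known finite-time MSE bounds for constant-stepsize Markovian LSA, so the task reduces to controlling $\sum_m |h_m|$. Using $a(M-1)+b(l-1) \le (a+b)(M-1) < M$ together with Stirling's approximation, the $m$-independent product satisfies
\begin{equation*}
\prod_{l=1}^M \frac{a(M-1)+b(l-1)}{bl} \le \frac{M^M}{b^M M!} \le \Big(\frac{e}{b}\Big)^M.
\end{equation*}
The remaining factor $bm/[a(M-1)+b(M-m)]$ is at most $\bigO(M)$ uniformly in $m$, and $\sum_m \binom{M}{m} = 2^M$. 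Multiplying these together gives $\sum_m |h_m| = \bigO(M\cdot(2e/b)^M)$, and squaring yields the claimed $\bigO((2M/b)^{2M})$ bound.

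The main technical obstacle is the bookkeeping in part (1): correctly matching the re-indexed factors of $a(M-1)+b(l-1)$ in the numerator with the factor $a(M-1)+b(M-m)$ that gets separated off in the denominator (via division by $\alpha_m$), and reconciling the resulting factorials with the binomial coefficient $\binom{M}{m}$. Part (2) is comparatively mechanical, with the only nontrivial input being the uniform MSE bound for $\theta_\infty^{(\alpha_m)}$, which I would import from prior work such as \cite{srikant-ying19-finite-LSA, huo2022bias}.
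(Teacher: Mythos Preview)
Your proposal is correct and follows essentially the same route as the paper: derive $h_m$ from the Lagrange/Vandermonde formula $h_m=\prod_{j\neq m}\alpha_j/(\alpha_j-\alpha_m)$ and substitute the equidistant nodes for part (1), then for part (2) bound $\var(\widetilde\theta^\cA_\infty)$ by $(\sum_m|h_m|)^2$ times a uniform $\bigO(1)$ variance bound from \cite{huo2022bias}, and control $\sum_m|h_m|$ via $\sum_m\binom{M}{m}=2^M$ together with a crude bound on the $m$-independent product. The only cosmetic difference is that you invoke Stirling to get $\prod_l \tfrac{a(M-1)+b(l-1)}{bl}\le (e/b)^M$, whereas the paper keeps the product as $\prod_l \tfrac{(a+b)M}{bl}$; your resulting $\bigO\big(M^2(2e/b)^{2M}\big)$ is in fact slightly sharper and immediately implies the stated $\bigO((2M/b)^{2M})$.
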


Please refer to Appendix~\ref{sec:proof-var-eqd} for the proof of the proposition.

When the stepsize sequence decays in equidistance, stepsizes do not decrease as quickly to zero based on the choice of $a,b$. However, the upper bound of variance now is of order $M^M$, which suggests that the variance could blow up quickly as $M$ increases, potentially offsetting the benefits from reduced bias.

\subsection{Zero Bias Special Cases}
\label{sec:zero-special-cases}

As discussed earlier, RR extrapolation reduces the asymptotic bias and hence increases the asymptotic coverage probability of $\theta^\ast$ based on the CI constructed for $\E[\theta_\infty]$. Hence, the Markovian underlying data and the presence of asymptotic bias should not discourage one from using constant stepsize LSA iterates for statistical inference.

In this section, we would like to highlight that Markovian data need not be a sufficient condition for the presence of asymptotic bias in LSA. That is, there are several commonly observed scenarios where no asymptotic bias is present, even when the underlying data is Markovian.

\subsubsection{Independent Multiplicative Noise}

In this scenario, we expand our Markov chain state space to incorporate an independent bounded zero-mean random variable, i.e., $x_t=(s_t,\xi_t)$, where $(s_t)_{t\geq0}$ is the uniformly ergodic Markov chain and $(\xi_t)_{t\geq0}$ is uniformly bounded, i.e., $\|\xi_t\|\leq u$, and i.i.d.\ sampled from distribution $\xi$ with $\E[\xi]=0$. Consider the following Markovian LSA updates, 
\begin{equation}
\label{eq:add-mkv-noise}
    \theta_{t+1}=\theta_t+\alpha\Big((\BarA+\xi_t)\theta_t + b(s_t)\Big),
\end{equation}
where $A(s_t,\xi_t)=\BarA+\xi_t$ and $b(s_t,\xi_t)=b(s_t)$.
It can be easily verified that the above LSA iteration satisfies the required Assumptions~\ref{assumption:uniform-ergodic}--\ref{assumption:hurwitz}.

\begin{proposition}
\label{prop:lsa-indep-mul}
    Consider the LSA iteration of \eqref{eq:add-mkv-noise}, there exists some threshold $\alpha_0\in[0,1)$ such that $\forall \alpha\in[0,\alpha_0)$, $(x_t,\theta_t)_{t\geq0}$ converges weakly to a unique stationary distribution. Moreover, $\E[\theta_\infty]=\theta^\ast$.
\end{proposition}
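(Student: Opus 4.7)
The plan is to dispatch the two conclusions separately. Part one (existence and uniqueness of the weak limit) is handled by verifying that the augmented model~\eqref{eq:add-mkv-noise} fits the hypotheses of the weak-convergence result of \cite{huo2022bias} already invoked in Section~3. Since $\xi_t$ is i.i.d.\ and independent of the $s$-chain, the joint process $x_t=(s_t,\xi_t)$ is a time-homogeneous Markov chain on $\mS\times\operatorname{supp}(\xi)$ whose transition kernel factors; uniform ergodicity of the product chain follows from uniform ergodicity of $s_t$ together with the i.i.d.\ resampling of $\xi_t$, with stationary distribution $\pi_s\otimes\pi_\xi$. The operator-norm bound $\|\BarA+\xi_t\|\le\|\BarA\|+u$ together with $\|b(s_t)\|\le\bmax$ covers Assumption~\ref{assumption:boundedness}, and $\E_\pi[A(x)]=\BarA+\E[\xi]=\BarA$ is Hurwitz by hypothesis. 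Hence part one follows immediately.

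Part two rests on a single key observation: because $\xi_t$ is i.i.d.\ and independent of $(s_u)_{u\ge 0}$, it is independent of $\theta_t$, since $\theta_t$ is a deterministic function of $\{\xi_0,\ldots,\xi_{t-1},s_0,\ldots,s_{t-1},\theta_0\}$ alone. Taking expectations in~\eqref{eq:add-mkv-noise},
\begin{equation*}
    \E[\theta_{t+1}]=\E[\theta_t]+\alpha\BarA\,\E[\theta_t]+\alpha\,\E[\xi_t\,\theta_t]+\alpha\,\E[b(s_t)]=(I+\alpha\BarA)\,\E[\theta_t]+\alpha\,\E_\pi[b],
\end{equation*}
where I used $\E[\xi_t\,\theta_t]=\E[\xi_t]\,\E[\theta_t]=0$ by independence combined with $\E[\xi_t]=0$, and $\E[b(s_t)]=\E_\pi[b]$ since $s_0\sim\pi_s$ so the $s$-chain is stationary from the start. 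In contrast to generic Markovian LSA, the would-be bias-producing correlation between $A(x_t)$ and $\theta_t$ collapses to zero, because the stochastic part of $A$ is freshly sampled at each step.

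This deterministic affine iteration has the unique fixed point $-\BarA^{-1}\E_\pi[b]=\theta^\ast$, and for $\alpha$ small enough that the spectral radius of $I+\alpha\BarA$ is strictly less than one (available since $\BarA$ is Hurwitz), $\bar\theta_t:=\E[\theta_t]$ converges to $\theta^\ast$ geometrically in $t$. To finish I would upgrade the weak convergence $\theta_t\Rightarrow\theta_\infty$ to convergence of first moments by establishing uniform integrability of $\{\theta_t\}$: a uniform second-moment bound $\sup_t\E\|\theta_t\|^2<\infty$, available from standard finite-time MSE bounds for constant-stepsize LSA \citep{srikant-ying19-finite-LSA,huo2022bias}, suffices. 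The threshold $\alpha_0$ in the statement is then taken as the minimum of the one provided by the weak-convergence result and the one ensuring $\rho(I+\alpha\BarA)<1$. Combining yields $\E[\theta_\infty]=\lim_t\bar\theta_t=\theta^\ast$.

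The main conceptual step is the independence argument in the second paragraph, which identifies precisely why the asymptotic bias vanishes despite the Markovian data; everything else is routine. The only technical subtlety is the passage from weak to first-moment convergence, dispatched by the MSE bound.
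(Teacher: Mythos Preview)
Your proof is correct and rests on the same key observation as the paper's: the independence of $\xi_t$ from $\theta_t$ forces $\E[\xi_t\theta_t]=0$, which is exactly what kills the bias. The execution differs, however. The paper works directly at stationarity: invariance of $\bar\mu$ gives $\E[\theta_{\infty+1}]=\E[\theta_\infty]$, so the expected one-step increment vanishes, and the same independence argument (now applied to $\xi_\infty$ and $\theta_\infty$) yields $\BarA\,\E[\theta_\infty]+\bar b=0$ in one line. This sidesteps both the spectral-radius condition on $I+\alpha\BarA$ and the uniform-integrability passage you invoke to identify $\lim_t\E[\theta_t]$ with $\E[\theta_\infty]$. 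Your finite-$t$ route is arguably more transparent about \emph{why} the noise and the iterate are independent (at stationarity this independence is inherited from the finite-time product structure, but the paper leaves that implicit), at the cost of the extra limit-taking machinery.
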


Please refer to Appendix~\ref{sec:proof-mkv-add} for the proof of the proposition.

The above proposition implies that when the Markovian noise is only additive, the limiting expectation of the iterates converges to $\theta^\ast$. Hence, our CLT on $\E[\theta_\infty]$ allows us to construct CI and perform statistical inference directly on $\theta^\ast$. 

\subsubsection{Independent Additive Noise in Linear Regression}

We consider an independent additive noise setting in linear regression, which has been previously discussed in \cite{guy2020} and \cite{huo2022bias}. 

In this linear regression problem, independent observations $(s_t)_{t\geq0}$ are sequentially generated from a uniformly ergodic Markov chain with stationary distribution $\nu$ and $\E_\nu[s_ts_t^\top]$ is positive definite. $y_t=s_t^\top w^\ast+\epsilon_t$, where $\epsilon_t$ is an i.i.d.\ zero-mean noise with variance $\sigma^2_\epsilon$. 
The SGD iterates to estimate $w^\ast$ are updated in the following fashion, $  w_{t+1}=w_t-\alpha s_t\Big(\langle w_t,s_t\rangle - y_t\Big).$
Casting the problem under the LSA framework, we have $w_{t+1}=w_t+\alpha(A(x_t)w_t+b(x_t))$ with $x_t=(s_t,\epsilon_t)$, $A(x_t)=-s_ts_t^\top$ and $b(x_t)=s_t(s_t^\top w^\ast+\epsilon_t)$.
It has been shown in \cite{huo2022bias} that LSA under constant stepsizes has zero asymptotic bias in this setup.

\subsubsection{Realizable Linear-TD Learning in RL}
In this section, we specialize our discussion on linear-TD learning in RL.

TD learning algorithm~\citep{Sutton1988-td} is an important algorithm in RL for policy evaluation. Potentially equipped with linear function approximation, it is a special case of LSA.

We model the linear-TD problem with a Markov Reward Problem (MRP) $(\cS, P^\cS, r,\gamma)$. The value function $V:\cS\to\R$ is approximated by a linear function $V(s)=\E[\sum_{t=0}^\infty\gamma^tr(s_t)|s_0=s]\approx\phi(s)^\top\theta$, where $\phi:\cS\to\R^d$ is a known feature map of finite rank $d$ and $\theta$ is the unknown weight vector to be estimated. The feature map is normalized such that $\phi_{\max}:=\sup_{s\in\cS}\|\phi(s)\|_2\leq\frac{1}{\sqrt{1+\gamma}}$.

We consider the semi-simulator regime, in which the iterates are updated in the following fashion,
\begin{equation}
    \label{eq:linear-td}
    \theta_{t+1}=\theta_t+\alpha\Big(r(s_t)+\gamma\phi(s_t^\text{next})^\top\phi_t-\phi(s_t)^\top\theta_t\Big)\phi(s_t),
\end{equation}
where $(s_{t})_{t\ge0}$ is a Markov chain and $s_{t}^{\text{next}}$ is sampled independently from $P^{\mS}(s_{t},\cdot)$ conditioned on $s_{t}$.

\begin{proposition}
\label{prop:td-realize}
Assuming $(s_k)_{k\geq0}$ is a uniformly ergodic Markov chain on state space $\cS$ with $s_0\sim \pi^\cS$. Assuming the linear-TD is realizable, i.e., $\exists v\in\R^d$ such that $V(s)=\phi(s)^\top v$ for all $s\in\cS$. The linear-TD iterates of \eqref{eq:linear-td} converge without asymptotic bias, i.e., $\E[\theta_\infty]=\E[\theta^\ast]$.
\end{proposition}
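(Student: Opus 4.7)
The plan is to exploit realizability in two ways: first to identify $\theta^\ast$ with the representing vector $v$, and second to turn $A(x_t)v + b(x_t)$ into a conditional zero via the freshness of $s_t^{\text{next}}$. These two facts combine to give a recursion for $\E[\theta_t]$ that is \emph{preserved} under the initialization $\theta_0 = v$; the uniqueness of the stationary distribution of $(x_t,\theta_t)$ established in \cite{huo2022bias} then transfers this invariance to the stationary law $\mu$. The main subtlety will be careful filtration bookkeeping — specifically, separating conditioning on the full data from conditioning only on the underlying $s$-chain.

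Concretely, set $\bar{A}(s) := \phi(s)\bigl(\gamma\,\E_{s'\sim P^{\cS}(s,\cdot)}[\phi(s')] - \phi(s)\bigr)^{\!\top}$ and $b(s) := r(s)\phi(s)$. The Bellman equation $V(s) = r(s) + \gamma\,\E_{s'\mid s}[V(s')]$ combined with realizability $V(s) = \phi(s)^\top v$ yields the pointwise identity
\begin{equation*}
\bar{A}(s)\,v \;=\; \phi(s)\bigl[\gamma\,\E_{s'\mid s}[V(s')] - V(s)\bigr] \;=\; -r(s)\phi(s) \;=\; -b(s), \qquad \forall\,s\in\cS,
\end{equation*}
and averaging over $s\sim\pi^{\cS}$ gives $\BarA v + \bar b = 0$, so $\theta^\ast = v$ and it suffices to prove $\E[\theta_\infty] = v$. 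Let $\cF_t$ be the $\sigma$-algebra generated by $\theta_0$ together with the samples through $s_t$, so that $\theta_t$ and $s_t$ are $\cF_t$-measurable while the fresh draw $s_t^{\text{next}}\sim P^{\cS}(s_t,\cdot)$ is conditionally independent of $\cF_t$ given $s_t$. Then $\E[A(x_t)\mid\cF_t] = \bar{A}(s_t)$ and $\E[b(x_t)\mid\cF_t] = b(s_t)$, and the pointwise identity collapses the LSA update to
\begin{equation*}
\E[\theta_{t+1} - v \mid \cF_t] \;=\; \bigl(I + \alpha\,\bar{A}(s_t)\bigr)(\theta_t - v).
\end{equation*}

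Set $u_t := \theta_t - v$ and condition further on $s_0,\ldots,s_t$. Since $s_t$ is independent of $(\theta_0, s_0^{\text{next}},\ldots,s_{t-1}^{\text{next}})$ given $s_0,\ldots,s_{t-1}$ — and hence independent of $u_t$ given that information — we have $\E[u_t\mid s_0,\ldots,s_t] = \E[u_t\mid s_0,\ldots,s_{t-1}]$, and the tower property promotes the previous display to
\begin{equation*}
\E[u_{t+1}\mid s_0,\ldots,s_t] \;=\; \bigl(I + \alpha\,\bar{A}(s_t)\bigr)\,\E[u_t\mid s_0,\ldots,s_{t-1}].
\end{equation*}
Iterating with the initialization $\theta_0 = v$ (so $u_0 = 0$) yields $\E[u_t] = 0$, i.e., $\E[\theta_t] = v$ for every $t\ge 0$. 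Finally, weak convergence $(x_t,\theta_t)\Rightarrow\mu$ combined with the uniform-in-$t$ second-moment bound for constant-stepsize Markovian LSA from \cite{huo2022bias} upgrades this to $\E[\theta_t]\to\E[\theta_\infty]$, and since this limit does not depend on $\theta_0$ we conclude $\E[\theta_\infty] = v = \theta^\ast$. The hardest point is really the very first step: one must recognize that it is the \emph{freshness} of $s_t^{\text{next}}$ (not merely the Markov property of $(s_t)$) that makes $\E[A(x_t)\mid\cF_t]$ a pointwise function of $s_t$ and lets the realizability identity kill the effective drift pointwise; had $s_t^{\text{next}}$ been replaced by $s_{t+1}$ along the trajectory, the surviving correlation would reintroduce the asymptotic bias.
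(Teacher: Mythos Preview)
Your argument is correct, but it proceeds along a different axis than the paper's proof. The paper invokes Corollary~\ref{cor:zero-bias-sufficient}, a general \emph{backward} criterion: once realizability gives the pointwise identity $\bar A(s)\theta^\ast + b(s)=0$, freshness of $s_k^{\text{next}}$ (its conditional independence from $x_{k+1}$ given $s_k$) yields $\E[A(x_k)\theta^\ast+b(x_k)\mid x_{k+1}]=0$, and the corollary then delivers $\E[\theta_\infty]=\theta^\ast$ in one stroke. You instead run a \emph{forward} argument: choosing the special initialization $\theta_0=v$, you use the same freshness (now as conditional independence of $s_t^{\text{next}}$ from $\cF_t$ given $s_t$, and of $s_t$ from the past simulator draws given $s_{0:t-1}$) to show that $\E[\theta_t-v\mid s_{0:t-1}]$ satisfies a linear recursion with zero initial value, hence $\E[\theta_t]\equiv v$; uniqueness of the limit $\bar\mu$ together with the uniform second-moment bounds from \cite{huo2022bias} then transfers this to $\E[\theta_\infty]$. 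The paper's route is shorter once the corollary is in hand and makes explicit that zero bias is a property of the \emph{transition structure} of $(x_k)$ alone, independent of the iterate dynamics. Your route is more self-contained --- it never appeals to the bias-expansion machinery underlying the corollary --- and the ``start at $\theta^\ast$ and show the mean is preserved'' trick is a reusable device for pinning down stationary means. Both proofs ultimately rest on the same two ingredients you correctly isolate: the pointwise Bellman identity from realizability, and the semi-simulator freshness that prevents correlation between $s_t^{\text{next}}$ and the rest of the trajectory from reintroducing bias.
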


Please refer to Appendix~\ref{sec:proof-realize} for the proof of the proposition.

The realizability assumption is not particularly restricting, as there are a number of RL problems that satisfy this assumption, such as TD learning on tabular MDP or linear MDP.
Hence, if we have such a structured linear-TD problem, we could use the iterates obtained in the semi-simulator setting to perform statistical inference on the weight vector $\theta$ estimation and subsequently the value function directly.

\section{Numerical Experiments}

We conduct extensive numerical experiments to examine our proposed inference procedure and stepsize selection guidelines in RR extrapolation. We present our main results in this section. Additional sets of experiments and detailed experiment designs are included in Appendix~\ref{sec:exp-details}.

\subsection{Inference Performance Comparison}

We examine the empirical performance of our proposed inference procedure with constant stepsizes and RR extrapolation. We consider LSA problems in dimension $d=5$ for a finite state, irreducible, and aperiodic Markov chain with $N=10$ states. We generate the transition probability $P$, and the functions $A$ and $b$ randomly; see the appendix for details. 
We construct CIs using the LSA iterates. We examine the performance from three aspects, namely the $\ell_2$ error of the point estimate to the target vector, i.e.,$\|\bar\theta-\theta^\ast\|_2$, the coordinate-wise CI width and coverage probabilities. 
For more details on setting up the LSA problem, please refer to Appendix~\ref{sec:lsa-exp-details}.

We mainly study under constant stepsizes $\alpha=0.2$ and $\alpha=0.02$. 
The two choices of constant stepsizes are of different scales, allowing us to demonstrate extreme effects in convergence and inference performance. RR extrapolation is conducted using the two constant stepsizes. For comparison, we also consider diminishing stepsizes with initial stepsize $\alpha_0$ being 0.2 and 0.02 respectively, and a diminishing stepsize  $\alpha_k = \alpha_0 k^{-0.5}$ for $k\geq1$. The diminishing rate $k^{-0.5}$ is chosen as it is on the boundary of square-summable assumption and has often been observed with the best empirical performance among $k^{-\beta}$ with $\beta\in[0.5,1]$. 
For more details on the inference procedure with diminishing stepsize LSA iterates, please refer to Appendix~\ref{sec:inf-proce-dimin}.

\subsubsection{Baseline Comparison for i.i.d.\ LSA}
We first conduct a cross-study to examine the performance of inference with constant and diminishing stepsize under i.i.d.\ data. We notice that constant stepsizes slightly outperform the $0.2  k^{-0.5}$ diminishing stepsize. Please refer to Appendix~\ref{sec:baseline-iid} for detailed experiment design and results.

\subsubsection{Performance Comparison for Markovian LSA}
We examine 100 different LSA problems of the same dimension $|\cX|=10$ and $d=5$. For each LSA setup, the parameters $(P,A,b)$ are generated randomly, and we simulate 100 trajectories $(x_k)_{k\ge0}$ of length $10^5$, run LSA iterates with the above-described stepsize regimes, and perform inference. We summarize the distribution of the performance across the 100 cases in Table~\ref{tab:quantile table mkv}.

With Markovian data, the LSA with constant stepsize converges with nonzero asymptotic bias. The presence of bias is rather obvious when $\alpha=0.2$, evident from the large $\ell_2$ error. Hence, CIs constructed with $\alpha=0.2$ iterates have the worst coverage probabilities. Reducing the stepsize to $\alpha=0.02$ will reduce the asymptotic bias, and hence CIs constructed with $\alpha=0.02$ iterates have significantly better performance. 

When the RR extrapolation technique is employed, the confidence intervals constructed enjoy the best coverage properties, with the smallest $\ell_2$ error, comparable CI width, and higher coverage probabilities of $\theta^\ast$. Moreover, the median coverage probability is around the targeted 95\%. 

\begin{table}[ht]
    \centering
    \begin{tabular}{ |c|c|c|c|c|c|c| } 
        \hline
        \multicolumn{2}{|c|}{\multirow{2}{*}{Percentile} }&\multicolumn{5}{|c|}{Comparison table} \\
        \cline{3-7}
        \multicolumn{2}{|c|}{}& 0.2 & 0.02 & RR & $0.2/\sqrt{k}$ & $0.02/\sqrt{k}$\\ 
        \hline
        \multirow{3}{*}{10} & $\ell_2$ & 3.60 & 1.01 & 0.92 &  0.87 & 0.90\\ 
        & CI  &1.44 & 1.30 & 1.31 & 1.21 & 0.87\\ 
        & Cov & 0 & 72 & 90 & 86 & 62\\ 
        \hline
        \multirow{3}{*}{25} & $\ell_2$ & 6.05 & 1.25 & 1.08 & 1.06 & 1.11\\ 
        & CI  & 1.87 & 1.68 & 1.70 & 1.52 & 1.20\\ 
        & Cov & 0 & 82 & 91 &88 & 70\\ 
        \hline
        \multirow{3}{*}{50} & $\ell_2$ & 8.12 & 1.59 & 1.32 & 1.32 & 1.42\\ 
        & CI  & 2.70 & 2.38 & 2.41 & 2.14 & 1.51\\ 
        & Cov & 11 & 90 & 94 & 91&76\\ 
        \hline
         \multirow{3}{*}{75} & $\ell_2$ & 14.82 & 2.39 & 1.90 &1.85 & 2.13\\ 
        & CI  & 3.95 & 3.40 & 3.47& 3.05 & 2.50\\ 
        & Cov & 66 & 93 & 95 & 94& 83\\ 
        \hline
        \multirow{3}{*}{90} & $\ell_2$ & 25.53 & 4.20 & 4.14 & 3.44& 6.92\\ 
        & CI & 10.49 & 6.31 & 8.91 & 5.21 & 4.82\\ 
        & Cov & 92 & 95 &97 & 96& 90\\ 
        \hline
    \end{tabular}
    \caption{Inference comparison of different stepsize regimes. 
    $\ell_2$ denotes the $\ell_2$ error of point estimates and is of unit $10^{-3}$. ``CI" refers to the CI width and is also of unit $10^{-3}$. ``Cov" represents the coverage probability. Both the CI width and coverage probability are for the 1-st coordinate estimate.
    }
    \label{tab:quantile table mkv}
\end{table}

\subsubsection{Batch Number Selection}

We next inspect the impact of batch number selection, i.e., the value $K$ in Section~\ref{sec:inference-proc}. We focus on one specific LSA problem with $|\cX|=10$ and $d=5$ Markovian data. The trajectory length is set at $10^6$ and the number of batches varies. For each combination of batch number and stepsize, we run 500 independent runs and record the mean and the standard error in Table~\ref{tab:batch_label}.

For constant stepsizes, as we compare across different rows, the mean estimates are at the same scale, and not influenced much by the choice of batch number.
However, for diminishing stepsizes, as the batch number increases beyond the recommended level proposed in \cite{ChenXi2020}, the CI widths decrease quickly, which drastically reduces the coverage probability and implies an underestimation of the variance. As the stepsize decays in the diminishing stepsize sequence, the iterates become increasingly correlated, and hence a longer batch size is needed to overcome the strong correlation. Therefore, the number of batches used in diminishing stepsize regime cannot arbitrarily increase. In contrast, iterates under constant stepsize mixes at the same rate, and hence is more robust to the number of batches used in inference. 

\begin{table}[tb]
    \centering
    \begin{tabular}{ |c|c|c|c|c|c|c| } 
        \hline
        \multicolumn{2}{|c|}{\multirow{2}{*}{Batch No.} }&\multicolumn{5}{|c|}{Comparison table} \\
        \cline{3-7}
        \multicolumn{2}{|c|}{}& 0.2 & 0.02 & RR & $0.2/\sqrt{k}$ & $0.02/\sqrt{k}$\\ 
        \hline
        \multirow{6}{*}{50} & \multirow{2}{*}{$\ell_2$} & 7.79 & 1.03 & 0.74 & 0.71 & 0.73 \\
        && (0.02) & (0.02) & (0.02) & (0.02)& (0.02)\\
        & \multirow{2}{*}{CI} & 1.52 & 1.37 & 1.37 & 1.30 & 1.01 \\ 
        && (0.01) & (0.01) & (0.01) & (0.01)& (0.01)\\ 
        & \multirow{2}{*}{Cov} & 0 & 78.6 &92.8  & 93.0 & 81.6 \\ 
        && (0) & (1.8) &  (1.1) & (1.1)& (1.7)\\ 
        \hline
        \multirow{6}{*}{100} & \multirow{2}{*}{$\ell_2$} & 7.80  & 1.01  & 0.70  & 0.69 & 0.72 \\ 
        &&(0.02) & (0.02)& (0.02)& (0.02)&(0.02)\\
        & \multirow{2}{*}{CI} & 1.53 & 1.37 & 1.38  & 1.34 & 0.81 \\ 
        &&(0.01) & (0.00)&(0.00)&(0.00)&(0.00)\\
        & \multirow{2}{*}{Cov} & 0 & 80.6  &94.4   & 95.0 & 71.2 \\ 
        &&(0) &(1.8) &(1.0) &(1.0)&(2.0)\\
        \hline
        \multirow{6}{*}{500} & \multirow{2}{*}{$\ell_2$} & 7.80  & 1.03  & 0.72 & 0.70& 0.70 \\ 
        &&(0.02)& (0.02) &(0.02) & (0.02) & (0.02)\\
        & \multirow{2}{*}{CI} & 1.54 & 1.38 & 1.39 & 1.06 & 0.40\\ 
        &&(0.00) &(0.00) &(0.00) &(0.00)& (0.00)\\
        & \multirow{2}{*}{Cov} & 0 & 80.8 &94.2  & 88.8&42.2 \\ 
        &&(0) &(1.8) & (1.0) & (1.4) &(2.2)\\
        \hline
         \multirow{6}{*}{$1000$} & \multirow{2}{*}{$\ell_2$} & 7.79 & 1.01 & 0.73 & 0.69 & 0.74 \\ 
         && (0.02) & (0.02) & (0.02) & (0.02)& (0.03)\\
        & \multirow{2}{*}{CI} & 1.54 & 1.38 & 1.39& 0.82& 0.29 \\ 
        && (0.00) & (0.00) & (0.00) &  (0.00) &(0.00)\\
        & \multirow{2}{*}{Cov} & 0  & 83.0 &94.2 & 75.4& 30.4 \\ 
        &&(0) &(1.7) & (1.0) &  (1.9) &(2.1)\\
        \hline
    \end{tabular}
    \caption{Inference comparison of different batch numbers.
    $\ell_2$ and ``CI" values are of unit $10^{-3}$. Both CI width and coverage probability are for the 1-st coordinate estimate.}
    \label{tab:batch_label}
\end{table}

\subsubsection{Trajectory Length}

We next investigate the impact of increasing trajectory length with a range of stepsizes to better visualize the trend. For each trajectory length, we fix the batch number as $T^{0.3}$ as recommended in~\cite{ChenXi2020}. The statistics are in Table~\ref{tab:traj_table}. 

We observe that inference with RR iterates consistently presenting the best results.
When the trajectory length is at $10^3$, the iterates under various stepsize regimes would still be distant from $\theta^\ast$, which explains the mediocre coverage of $\theta^\ast$. 
As the trajectory length increases, the iterates gradually approach stationarity. Hence, the inference performance deteriorates for $\alpha=0.2$ with 0 coverage probability, as the iterates converge to $\E[\theta_\infty]$, which is asymptotically biased from $\theta^\ast$. 
However, the iterates under diminishing stepsize converge closer to $\theta^\ast$, as the trajectory length is longer. Therefore, the inference performance with diminishing stepsize improves.
Inference with RR extrapolated iterates generally gives satisfactory performance. Inference with constant stepsize might be especially useful when the simulation trajectory length budget is limited, as diminishing stepsize iterates struggle to output good inference results.

\begin{table*}[ht]
    \centering
        \begin{tabular}{ |c|c|c|c|c|c|c|c|c|c| } 
        \hline
        \multicolumn{2}{|c|}{\multirow{2}{*}{Length} }&\multicolumn{8}{|c|}{Comparison table} \\
        \cline{3-10}
        \multicolumn{2}{|c|}{}& 0.2 & 0.15 & 0.1 & 0.05 & 0.02 & RR & $0.2/\sqrt{k}$ & $0.02/\sqrt{k}$\\ 
        \hline
        \multirow{6}{*}{$10^3$} & \multirow{2}{*}{$\ell_2$} & 2.84 &2.55 & 2.48 & 2.35 & 2.28 & 2.28 & 2.36 & 4.28 \\ 
        &&(0.06) & (0.06) & (0.05) & (0.05) & (0.05) & (0.05) & (0.05) & (0.07)\\
        & \multirow{2}{*}{CI} & 4.16 & 3.91 & 3.81 & 3.68 & 3.71 & 3.76 & 3.10 & 1.01 \\ 
        && (0.07) & (0.06) & (0.06) & (0.06) & (0.06) & (0.06) & (0.05) & (0.02)\\
        & \multirow{2}{*}{Cov} & 82.2 & 84.0 & 83.0 & 83.6 & 83.6 & 83.4 & 76.8 & 24.8 \\ 
        && (1.7) & (1.6) & (1.7) & (1.7) & (1.7) & (1.7) & (1.9)& (1.9)\\
        \hline
        \multirow{6}{*}{$10^4$} & \multirow{2}{*}{$\ell_2$} & 1.15 & 0.99 & 0.89 & 0.79 & 0.73 & 0.72 & 0.73 & 1.14\\ 
        && (0.02) & (0.02) & (0.02) & (0.02) & (0.02) & (0.02) & (0.02)& (0.02)\\
        & \multirow{2}{*}{CI} & 1.44 & 1.38 & 1.34 & 1.31 & 1.29 & 1.30 & 1.12 & 0.67 \\ 
        && (0.02) & (0.01) & (0.01) & (0.01) & (0.01) & (0.01) & (0.01)& (0.01)\\
        & \multirow{2}{*}{Cov} & 75.2 & 81.8 & 85.2 & 88.8 & 90.0 & 89.2 & 85.4 & 56.0 \\ 
        && (1.9) & (1.7) & (1.6) & (1.4) & (1.3) & (1.4) & (1.6)& (2.2)\\
        \hline
        \multirow{6}{*}{$10^5$} & \multirow{2}{*}{$\ell_2$} & 0.82 & 0.61 & 0.44 & 0.29 & 0.24 & 0.23 & 0.22 & 0.24 \\ 
        && (0.01) & (0.01) & (0.01) & (0.01) & (0.01) & (0.01) & (0.01)& (0.01)\\
        & \multirow{2}{*}{CI} & 0.46 & 0.45 & 0.43  & 0.42 & 0.41 & 0.41 & 0.38 & 0.29 \\ 
        && (0.00) & (0.01) & (0.01) & (0.01) & (0.00) & (0.00) & (0.00)& (0.00)\\
        & \multirow{2}{*}{Cov} & 0.04 & 22.2 & 56.4 & 83.2 & 88.2 & 91.2 & 90.0 & 79.2 \\ 
        && (0.9) & (1.8) & (2.2) & (1.7) & (1.4) & (1.3) & (1.3)& (1.8)\\
        \hline
        \multirow{6}{*}{$10^6$} &  $\ell_2$ & 0.81 & 0.57 & 0.38 & 0.20 & 0.00 & 0.00 & 0.00 & 0.00 \\ 
        && (0.00) & (0.00) & (0.00) & (0.00) & (0.00) & (0.00) & (0.00) & (0.00)\\
        & CI & 0.15 & 0.15 & 0.14 & 0.14 & 0.13 & 0.13 & 0.13 & 0.11 \\ 
        && (0.00) & (0.00) & (0.00) & (0.00) &(0.00) & (0.00) & (0.00)& (0.00)\\
        & Cov & 0 & 0 & 0 & 19.0 & 80.2 & 95.2 & 92.8 & 90.2 \\ 
        && (0) &(0) & (0) & (1.8) & (1.8) & (1.0) & (1.2)& (1.3)\\
        \hline
    \end{tabular}
    \caption{Inference comparison of different trajectory lengths.
    $\ell_2$ and ``CI" values are of unit $10^{-2}$. Both CI width and coverage probability are for the 1-st coordinate estimate.}
    \label{tab:traj_table}
\end{table*}

\subsubsection{Comparison Against Bootstrapping}
In all previous experiments, we are comparing inference using SA iterates under different stepsize regimes. Here, we compare to bootstrapping, a more classical approach to statistical inference. A detailed experimental design on bootstrapping inference can be found in Appendix~\ref{sec:bootstrap-details}.

While constant stepsize together with RR obtains $95.2\%$ coverage with $\ell_2$ error $1.0\times 10^{-5}$ and CI width $0.00134$, bootstrapping achieves $93.4\%$ coverage with $\ell_2$ error $2.0\times 10^{-3}$ and CI width $0.00411$. The large $\ell_2$ error and wide confidence interval of bootstrapping suggest that it may not be able to handle correlated data efficiently.

Bootstrapping requires the entire data set to be stored, requiring $\bigO(n^d)$ memory, whereas inference with SA iterates can accommodate online data, hence $\bigO(d)$ memory. Inference with SA iterates only needs first-order information, while bootstrapping may need higher-order information, making it potentially computationally prohibitive.

\subsubsection{Extending to Nonlinear SA}
Lastly, we examine the performance of our proposed technique in an example of logistic regression with unbounded Markovian data, to demonstrate that our proposed technique is robust in a wide range of settings, even if not currently addressed by our theory. 

The categorical data $(x_t,y_t)$ arrives online, with $x_{t}$ from a $2$-dimensional Gaussian AR(1) process, and $y_t\sim\text{Bernoulli}(\frac{1}{1+e^{-w_\ast x}}))$ with $w_\ast\in\R^2$ a random unit vector. 
This setting is non-Hurwitz, non-linear, and with an unbounded underlying Markov chain, but our proposed procedure still exhibits similar performance as seen in Table~\ref{tab:traj_table_reg}.

\begin{table}[t]
    \centering
    \begin{tabular}{ |c|c|c|c|c|c|c| } 
        \hline
        \multicolumn{2}{|c|}{\multirow{2}{*}{Length} }&\multicolumn{5}{|c|}{Comparison table} \\
        \cline{3-7}
        \multicolumn{2}{|c|}{}& 0.2 &  0.02 & RR & $0.2/\sqrt{k}$ & $0.02/\sqrt{k}$\\ 
        \hline
        \multirow{6}{*}{$10^3$} & \multirow{2}{*}{$\ell_2$} & 11.63 & 21.53 & 24.4 & 20.89 & 41.14\\ 
       & & (0.28) & (0.26)  & (0.25)  & (0.28)  & (0.25)\\ 
        & \multirow{2}{*}{CI} & 24.01  & 34.87 & 38.07 & 15.17  & 16.31\\ 
         & & (0.40) &(0.27) & (0.26) & (0.20) & (0.12)\\ 
        &  \multirow{2}{*}{Cov} & 70.2 & 34.0 & 26.4 &3.6 & 0\\ 
         & & (2.0)  & (2.1) & (2.0)  & (0.8) & (0)\\ 
         \hline
        \multirow{6}{*}{$10^4$} & \multirow{2}{*}{$\ell_2$} & 8.35 & 2.87 & 3.27 & 3.65 & 10.64\\ 
        & & (0.11)& (0.07) & (0.08)& (0.09) & (0.11)\\ 
        & \multirow{2}{*}{CI} & 9.27 & 9.37 & 9.79 & 5.84 & 10.23\\ 
        & & (0.10) & (0.09) & (0.09)& (0.07)  & (0.07)\\ 
        &  \multirow{2}{*}{Cov} & 12.0 & 89.8 & 84.8 & 53.6 & 1.4\\ 
        & & (1.5) & (1.4) & (1.6)& (2.2) &  (0.5)\\ 
        \hline
        \multirow{6}{*}{$10^5$} & \multirow{2}{*}{$\ell_2$} & 8.14&1.10 & 0.88& 0.90  & 1.40\\ 
        & & (0.04)& (0.03) & (0.02)& (0.02) & (0.03) \\ 
        & \multirow{2}{*}{CI} & 3.09 & 2.83 & 2.82 & 1.83  & 2.74\\ 
        & & (0.03) & (0.02) & (0.02)& (0.02) & (0.02)\\ 
        &  \multirow{2}{*}{Cov} & 0 &  80.2 & 90.2 & 73.0 &59.6\\ 
        & & (0) & (1.8) & (1.3)& (2.0) & (2.2) \\ 
        \hline
        \multirow{6}{*}{$10^6$} &  \multirow{2}{*}{$\ell_2$} &8.11  & 0.83  & 0.28 & 0.27& 0.28\\ 
            & & (0.01) & (0.01) & (0.01)& (0.01)& (0.01)\\ 
            & \multirow{2}{*}{CI} & 1.00  & 0.94 & 0.93 & 0.68 & 0.66 \\ 
            & & (0.01) & (0.01) & (0.01)& (0.00)& (0.00) \\ 
            &  \multirow{2}{*}{Cov} & 0  &8.0 & 95.0 & 82.6 &78.0 \\ 
            & & (0) & (1.2) & (1.0)& (1.7) &(1.9) \\ 
        \hline
    \end{tabular}
\caption{Inference comparison of different trajectory lengths.
$\ell_2$ and ``CI" values are of unit $10^{-2}$. Both CI width and coverage probability are for the 1-st coordinate estimate.}
\label{tab:traj_table_reg}
\end{table}

\subsection{RR Stepsize Selection}

Next, we numerically examine the impacts of different stepsize selection decisions in RR extrapolation. 

\subsubsection{Comparing Different Regimes}
We first compare the geometric and equidistant regimes. To ensure a fair comparison, we keep the range of the stepsize selection the same across the two regimes, so the difference would come solely from how the stepsize decays within the range. The range is determined by a dyadic decaying stepsize sequence, i.e. the smallest stepsize across the two regimes is fixed at $\alpha_M=\alpha_1/2^{(M-1)}$.
Under this setup, when the RR order is 2, there is no difference between these two schedules, so we start the comparison from the extrapolation with 3 stepsizes.

As depicted in Figure~\ref{fig:rr-eqd-geo}, when the order of extrapolation increases to around 5, the benefits of RR extrapolation have been mostly exploited. The difference between the two decaying schedules is more significant when the order is low.  

\subsubsection{Comparing Spacing in the Equidistant Regime}

We study the impact of decay distance in the equidistant decaying scheme, i.e., varying the value of $b$ in $\alpha_m=(a+b)-b\frac{(m-1)}{M-1}$. 
Specifically, we keep the $a+b$ value constant, and test with different $b/(M-1)\in\{0.01, 0.02, 0.03, 0.04\}$. 

As shown in Figure~\ref{fig:rr-eqd}, the smaller $b/(M-1)$ is, the worse the RR extrapolation performance is. This phenomenon is in line with Proposition~\ref{prop:rr-eqd}, as $b/(M-1)$ inversely scales with the variance upper bound. What is surprising is that when using small $b/(M-1)$, increasing the number of the stepsizes used does not always reduce the $\ell_2$ error of the extrapolated iterates. This may be due to the proximity of stepsizes, leading to large coefficients $\{h_m\}$, increasing the variance and offsetting the reduction in bias. Hence, this suggests that for RR extrapolation to be effective, stepsizes should not be too close to each other, but should explore a range of values.

\begin{figure}[htbp]
\centering %
\begin{minipage}[c]{0.47\textwidth}%
\centering \includegraphics[width=1\textwidth]{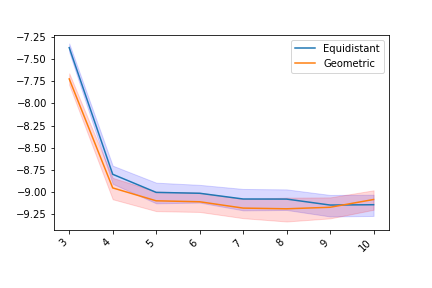}
\captionof{figure}{$\ell_2$ errors of averaged iterates with different order of RR extrapolation. The $Y$-axis is of logarithmic scale.} \label{fig:rr-eqd-geo} %
\end{minipage}\hspace{0.5cm}%
\begin{minipage}[c]{0.47\textwidth}%
 \centering \includegraphics[width=0.9\textwidth]{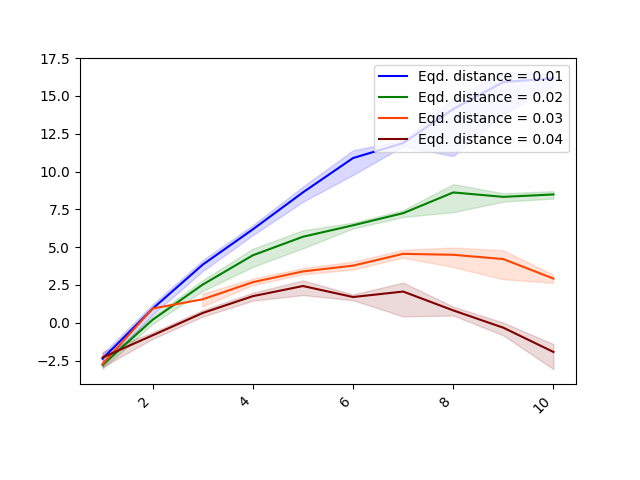}
\captionof{figure}{$\ell_2$ errors of averaged iterates with different order of RR extrapolation. The $Y$-axis is of logarithmic scale.
 } \label{fig:rr-eqd} %
\end{minipage}
\end{figure}

\section{Conclusion}

In this paper, we study and demonstrate the effectiveness of statistical inference with LSA iterates under Markovian data and constant stepsizes and RR extrapolation. We prove a CLT to theoretically justify the asymptotic normality of averaged iterates. We outline a statistical inference procedure that is combined with the RR extrapolation to construct CIs with better coverage probabilities of the target vector $\theta^\ast$. We conduct an extensive range of experiments to numerically examine the properties of our proposed inference framework, and compare the results against the classical approach with a diminishing stepsize sequence. Lastly, we also provide general guidance on how to select the stepsizes used in the RR extrapolation and present several interesting Markovian LSA scenarios where the iterates converge with zero asymptotic bias. 

Based on our work, an immediate next step is to provide more theoretical results to justify the validity of the confidence interval constructed with LSA iterates with Markovian data and constant stepsize, as the CI constructed in this fashion is non-consistent. Further future directions may also include: (a) develop an anytime variance estimator so that the inference procedure with constant stepsizes can be implemented in a fully online fashion; (b) given the simulation budget, i.e., fix the simulation trajectory length, how one should decide the order of RR extrapolation use to achieve decent inference performance.

\subsection*{Acknowledgements}
Y.\ Chen is supported in part by NSF CAREER Award CCF-2233152. Q.\ Xie is supported in part by NSF grant CNS-1955997.

\bibliographystyle{alpha}
\bibliography{cite}

\newpage{}

\appendix

\section{Properties of Markovian LSA}

Before proceeding to provide all proof of our theoretical results, we first recall several key properties and notations regarding the Markovian LSA with constant stepsize.

\paragraph*{Hurwitz Stability}
Under Assumption~\ref{assumption:hurwitz} that $\BarA$ is Hurwitz,  there exists a symmetric positive definite matrix $\Gamma$ such that $\BarA^\top\Gamma + \Gamma\BarA=-I$. We denote the largest and smallest eigenvalue of $\Gamma$ as $\gammax$ and $\gammin$ respectively. Additionally, we denote $\Gamma$-weighted norm as $\|v\|_\Gamma^2=\|v^\top\Gamma v\|_2$ and its induced operator norm as $\|M\|_\Gamma=\sup_{\|v\|_\Gamma=1}\|Mv\|_\Gamma$, where $M$ denotes some matrix.

\paragraph*{Geometric Mixing}

We first define the following mixing time for the underlying Markov chain $(x_t)_{t\geq0}$.

\begin{definition} For $\epsilon\in(0,1)$, the $\epsilon$-mixing
time of $(x_{t})_{t\geq0}$ with respect to $(A,b)$ is defined to
be the smallest number $\tau_{\epsilon}\geq1$ satisfying 
\begin{align}
\big\|\E[A(x_{t})\mid x_{0}=x]-\BarA\big\| & \leq\epsilon\cdot\Amax,\quad\forall x\in\cX,\,\forall t\geq\tau_{\epsilon},\label{eq:a-mix-time}\\
\big\|\E[b(x_{t})\mid x_{0}=x]-\bar{b}\big\| & \leq\epsilon\cdot\bmax,\quad\forall x\in\cX,\,\forall t\geq\tau_{\epsilon}.\label{eq:b-mix-time}
\end{align}
\end{definition}
Under Assumptions~\ref{assumption:uniform-ergodic} and~\ref{assumption:boundedness}, the Markov chain $(x_t)_{t\geq0}$ has mixing time 
\begin{equation*}
    \tau_{\epsilon}\leq K\log\frac{1}{\epsilon}\quad\forall\epsilon\in(0,1),
\end{equation*}
where the number $K\geq1$ is independent of $\epsilon$. Unless specified otherwise, we always choose $\epsilon=\alpha$. Owing to the order of $\tau_\alpha$, when $\alpha$ is sufficiently small, it is possible to upper bound the product $\alpha\tau_\alpha$ by some constant, i.e., $\alpha\tau_\alpha=\bigO(1)$. 

We would like to note that the following is another commonly used definition of mixing time.
\begin{definition} For $\epsilon\in(0,1)$, the $\epsilon$-mixing time of $(x_{k})_{k\geq0}$ is defined to
be the smallest number $t_\text{mix}(\epsilon)\geq1$ satisfying 
\begin{equation}
\sup_{x\in\cX}\big\|P^k(x,\cdot)-\pi(\cdot)\big\| \leq\epsilon,\quad\forall x\in\cX,\,\forall k\geq t_\text{mix}.\label{eq:chain-mix-time}
\end{equation}
\end{definition}
It is easy to see that $t_\text{mix}(\alpha)\geq\tau_\alpha$. Additionally, we define the mixing time $\widetilde\tau=\max(t_\text{mix}(1/4),\tau_\alpha)$, which will be used in the proof of CLT.

\paragraph*{Weak Convergence of LSA Iterates}

When the underlying data is Markovian, the iterates $(\theta_t)_{t\geq0}$ is no longer a standalone Markov chain. Hence, we expand the state space and consider the joint process $(x_t,\theta_t)_{t\geq0}$, which is a time-homogeneous Markov chain with constant stepsize. Under Assumptions~\ref{assumption:uniform-ergodic}--\ref{assumption:hurwitz}, it can be shown that the joint process converges weakly in Wasserstein distance.

To state this result, we first define the following metric and corresponding Wasserstein distance before stating the theorem.
\begin{equation}
    \Bar d\big((x,\theta),(x',\theta')\big):=\sqrt{d_{0}(x,x')+\|\theta-\theta'\|^{2}}.
\end{equation}
For a pair of distributions $\Bar{\mu}$ and $\Bar{\nu}$ in $\cP_{2}(\cX\times\R^{d}),$
we consider the Wasserstein-2 distance w.r.t.\ $\Bar d$: 
\begin{equation}
\begin{aligned}\Bar W_{2}(\Bar{\mu},\Bar{\nu}) & =\inf\bigg\{\Big(\E\big[\delta_{0}(x,x')+\|\theta-\theta'\|^{2}\big]\Big)^{1/2}:\;\law\big((x,\theta)\big)=\Bar{\mu},\;\law\big((x',\theta')\big)=\Bar{\nu}\bigg\}.\end{aligned}
\label{eq:w2-definition-extended}
\end{equation}

\begin{thm}[\cite{huo2022bias}] \label{thm:thm-converge} Suppose that Assumptions~\ref{assumption:uniform-ergodic}--~\ref{assumption:hurwitz} hold, there exists some threshold $\alpha_0\in(0,1)$ such that for stepsize $\alpha\in(0,\alpha_0)$, $(x_{t},\theta_{t})_{t\geq0}$ weakly converges to a unique limiting stationary distribution denoted by $\bar\mu$. Moreover, it holds that 
\begin{equation}
\label{eq:var-trace-order}
\tr(\var(\theta_{\infty}))=\bigO(\alpha\tau_\alpha) = \bigO(1).
\end{equation}
Additionally, For all $t\ge\tau_{\alpha}$, it holds that 
\begin{equation}
\label{eq:wasserstein-rate}
\bar{W}_{2}^{2}(\law(x_{t},\theta_{t}),\Bar{\mu})=\bigO\Big((1-c\alpha)^t\Big),
\end{equation}
where $c$ is some constant that is independent from stepsize $\alpha$ and iterate $t$.

\end{thm}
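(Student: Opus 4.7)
The plan is to treat $(x_{t},\theta_{t})_{t\geq0}$ as a time-homogeneous Markov chain on $\cX\times\R^{d}$ with transition kernel $Q$, and to establish a multi-step contraction in the Wasserstein-$2$ distance $\bar W_{2}$. Once such contraction is in hand, existence and uniqueness of the stationary distribution $\bar\mu$ follow from Banach's fixed point theorem on $(\cP_{2}(\cX\times\R^{d}),\bar W_{2})$, while the convergence rate \eqref{eq:wasserstein-rate} and the variance bound \eqref{eq:var-trace-order} are obtained by iterating the contraction and combining it with a Lyapunov drift argument.

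To build the contraction, I would set up a synchronous coupling. Given two initial laws $\bar\mu,\bar\nu$, construct coupled chains $(x_{t},\theta_{t})$ and $(x_{t}',\theta_{t}')$ on a common probability space by driving the $x$-marginals with the optimal coupling of the underlying Markov chain (uniform ergodicity then guarantees that $\Prob(x_{t}\neq x_{t}')$ decays geometrically) while the $\theta$-components evolve under the LSA recursion driven by their respective $x$-processes. Whenever $x_{t}=x_{t}'$, the difference $\Delta_{t}:=\theta_{t}-\theta_{t}'$ obeys the homogeneous linear recursion $\Delta_{t+1}=(I+\alpha A(x_{t}))\Delta_{t}$; whenever $x_{t}\neq x_{t}'$, there is an additive perturbation of size $\bigO(\|\theta_{t}'\|+\bmax)$, which is well controlled because the coupling time is short.

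The crux is a $\tau$-step contraction of $\Delta_{t}$ in the $\Gamma$-weighted norm for a window length $\tau$ of order $\tau_{\alpha}$. Unrolling the recursion gives
\begin{equation*}
\Delta_{t+\tau}=\prod_{s=0}^{\tau-1}\bigl(I+\alpha A(x_{t+s})\bigr)\Delta_{t}=\Bigl(I+\alpha\sum_{s=0}^{\tau-1}A(x_{t+s})+R\Bigr)\Delta_{t},
\end{equation*}
with $\|R\|=\bigO(\alpha^{2}\tau^{2})$. Taking conditional expectation given $x_{t}$ and invoking the mixing-time definition \eqref{eq:a-mix-time}, the sum $\sum_{s}\E[A(x_{t+s})\mid x_{t}]$ is within $\alpha\tau\cdot\Amax$ of $\tau\BarA$. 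Combining with the Lyapunov identity $\BarA^{\top}\Gamma+\Gamma\BarA=-I$ then yields
\begin{equation*}
\E\bigl[\|\Delta_{t+\tau}\|_{\Gamma}^{2}\mid x_{t},\theta_{t},\theta_{t}'\bigr]\leq(1-c\alpha\tau)\|\Delta_{t}\|_{\Gamma}^{2},
\end{equation*}
provided $\alpha_{0}$ is small enough that the $\bigO(\alpha^{2}\tau^{2})$ remainder is dominated by the $\alpha\tau$ contraction. This is the step I expect to be the main obstacle: per-step contraction fails because $A(x_{t})$ need not be close to $\BarA$ at any particular time, so one must aggregate over a mixing-time window while simultaneously controlling the bilinear cross-terms arising from expanding the matrix product — this is precisely what forces $\alpha_{0}$ to be small.

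The rest proceeds by essentially routine bookkeeping. Combining the $\theta$-contraction with the geometric coupling of the $x$-marginals gives $\bar W_{2}^{2}(\bar\mu Q^{\tau},\bar\nu Q^{\tau})\leq(1-c\alpha\tau)\bar W_{2}^{2}(\bar\mu,\bar\nu)$; Banach's fixed point theorem then produces a unique stationary distribution $\bar\mu$, and iterating the contraction $\lfloor t/\tau\rfloor$ times delivers the rate \eqref{eq:wasserstein-rate}. For the variance bound \eqref{eq:var-trace-order}, apply the same $\tau$-step unrolling to $\theta_{t}-\theta^{\ast}$: the decomposition $\theta_{t+1}-\theta^{\ast}=(I+\alpha A(x_{t}))(\theta_{t}-\theta^{\ast})+\alpha[(A(x_{t})-\BarA)\theta^{\ast}+(b(x_{t})-\bar b)]$, together with the contraction just established and the boundedness from Assumption~\ref{assumption:boundedness}, gives the drift inequality $\E\|\theta_{t+\tau}-\theta^{\ast}\|_{\Gamma}^{2}\leq(1-c\alpha\tau)\E\|\theta_{t}-\theta^{\ast}\|_{\Gamma}^{2}+C\alpha\tau$. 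Solving at stationarity yields $\E\|\theta_{\infty}-\theta^{\ast}\|^{2}=\bigO(\alpha\tau_{\alpha})=\bigO(1)$, from which $\tr(\var(\theta_{\infty}))$ inherits the same bound.
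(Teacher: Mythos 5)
Your proposal reproduces the strategy of the original proof in \cite{huo2022bias} --- which this paper cites for Theorem~\ref{thm:thm-converge} rather than reproving: a coupling of two copies of the joint chain, a mixing-time-window contraction in the $\Gamma$-weighted norm built on $\BarA^{\top}\Gamma+\Gamma\BarA=-I$, a fixed-point argument for existence and uniqueness of $\bar\mu$, and a drift inequality at stationarity for the trace-of-variance bound. It is worth noting that the closest argument actually written out in this paper (the proof of Theorem~\ref{thm:new-convergence}, which drops the $x_0\sim\pi$ assumption) takes a genuinely different route: instead of a window-wise contraction it unrolls the coupled difference completely as $w_k=B_{0:k-1}w_0+\alpha\sum_{l}B_{l:k-2}g_l$ and controls the random matrix products via imported fourth-moment bounds (\cite[Proposition~7]{durmus22-LSA}), eighth-moment bounds on the iterates (\cite[Theorem~9]{srikant-ying19-finite-LSA}), and the geometric decay of $\E[d_0(x_l,x_l')]$. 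Your route is more self-contained and makes the role of the Hurwitz Lyapunov function explicit; the paper's route trades that for heavier imported moment machinery but handles arbitrary initializations more directly.

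One caveat on your key step. The assertion that $\sum_{s=0}^{\tau-1}\E[A(x_{t+s})\mid x_t]$ lies within $\alpha\tau\Amax$ of $\tau\BarA$ does not follow from the mixing-time definition \eqref{eq:a-mix-time}, which only controls $\E[A(x_{t+s})\mid x_t]$ for $s\ge\tau_\alpha$; with a window of length $\tau=\tau_\alpha$ essentially none of the summands is controlled, and the uncontrolled terms contribute an error of order $\tau_\alpha\Amax$, the same order as the contraction you are trying to extract. The standard repair is to shift the conditioning a mixing time into the past --- compare $\Delta_t$ with $\Delta_{t-\tau_\alpha}$ at the cost of an $\bigO(\alpha\tau_\alpha)$ perturbation, so that every $A(x_{t+s})$ in the window is at least $\tau_\alpha$ steps ahead of the conditioning time --- or equivalently to use a window of length $2\tau_\alpha$ and exploit mixing only in its second half. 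This is exactly how the cited proofs execute the step; your sketch identifies the right obstacle but the window accounting as written does not close.
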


In fact, this theorem can be proven without the assumption of $x_0\sim\pi$. We present the proof of this generalization in Appendix~\ref{sec:proof-clt}. Such a generalization plays a crucial role in proving the desired CLT.

\paragraph*{Asymptotic Bias}
When the data is Markovian, the correlation among the data leads to an additional layer of nonlinear correlation among the LSA iterates, hence resulting in the presence of asymptotic bias \citep{huo2022bias} . 

\begin{thm} \label{thm:bias-characterization} 
Under Assumptions~\ref{assumption:uniform-ergodic}--~\ref{assumption:hurwitz} and some threshold $\alpha_1$ such that the stepsize $\alpha\in(0,\alpha_1)$, the asymptotic bias admits the following infinite series expansion 
\begin{equation}
\E[\theta_{\infty}^{(\alpha)}]-\theta^{\ast}=\sum_{i=1}^{\infty}\alpha^{i}B^{(i)},\label{eq:bias-expansion}
\end{equation}
where $B^{(i)}$ are vectors that are independent from the stepsize $\alpha$.
\end{thm}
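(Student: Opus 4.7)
The plan is to derive a self-consistent identity for $\E[\theta_\infty^{(\alpha)}]$ at stationarity and then unroll the LSA recursion backward in time to extract successive powers of $\alpha$. Let $Y_t:=\theta_t^{(\alpha)}-\theta^\ast$, $\tilde A(x):=A(x)-\BarA$, $\tilde b(x):=b(x)-\bar b$, and $\xi(x):=\tilde A(x)\theta^\ast+\tilde b(x)$, so that $\xi$ has mean zero under $\pi$. The stationarity condition $\E[\theta_{t+1}^{(\alpha)}]=\E[\theta_t^{(\alpha)}]$ combined with the LSA update yields, after rearrangement,
\begin{equation*}
\E[Y_\infty^{(\alpha)}] = -\BarA^{-1}\,\E[\tilde A(x_\infty)\,Y_\infty^{(\alpha)}].
\end{equation*}
Since $\E[\tilde A(x_\infty)]=0$ under $\pi$, the right-hand side is a covariance-type quantity that would vanish if the Markovian coupling between $x$ and $Y$ were absent; this already signals that the leading bias is $O(\alpha)$.

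Next, rewriting the recursion as $Y_{t+1}=M_t Y_t+\alpha\,\xi(x_t)$ with $M_t:=I+\alpha\BarA+\alpha\tilde A(x_t)$, and invoking the Wasserstein-2 contraction from Theorem~\ref{thm:thm-converge} to kill the boundary term as one iterates backward, I would represent $Y_\infty$ on the stationary bi-infinite extension $(x_{-l})_{l\geq 0}$ as
\begin{equation*}
Y_\infty^{(\alpha)} \stackrel{d}{=} \alpha\sum_{l\geq 1}\Bigl(\prod_{m=1}^{l-1}M_{-m}\Bigr)\xi(x_{-l}),
\end{equation*}
with the product ordered so that $M_{-1}$ sits on the left. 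Substituting into the base identity gives
\begin{equation*}
\E[Y_\infty^{(\alpha)}] = -\alpha\,\BarA^{-1}\sum_{l\geq 1}\E\!\left[\tilde A(x_0)\Bigl(\prod_{m=1}^{l-1}M_{-m}\Bigr)\xi(x_{-l})\right].
\end{equation*}

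To extract the $\alpha$-expansion, I would expand each factor $M_{-m}=I+\alpha\BarA+\alpha\tilde A(x_{-m})$ into its three pieces; every non-identity selection contributes an extra $\alpha$ and places either the deterministic $\BarA$ or the random $\tilde A(x_{-m})$ at that slot. Collecting all configurations with exactly $i-1$ non-identity selections among the $l-1$ slots produces the candidate
\begin{equation*}
B^{(i)} = -\BarA^{-1}\sum_{l\geq i}\sum_{\substack{S\subseteq\{1,\ldots,l-1\}\\|S|=i-1}}\sum_{c\in\{\BarA,\tilde A\}^{S}}\E\bigl[\tilde A(x_0)\,\Pi_{S,c}\,\xi(x_{-l})\bigr],
\end{equation*}
where $\Pi_{S,c}$ is the ordered product whose factor at slot $m\in S$ equals $\BarA$ or $\tilde A(x_{-m})$ as prescribed by $c$, with identity at slots outside $S$. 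By construction each $B^{(i)}$ is independent of $\alpha$.

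The main obstacle is verifying (a) that each $B^{(i)}$ is a finite vector and (b) that the series $\sum_i\alpha^i B^{(i)}$ actually converges (not merely as an asymptotic series) uniformly for $\alpha\in(0,\alpha_1)$. Part (a) reduces to absolute convergence of multi-point stationary correlations such as $\E[\tilde A(x_0)\tilde A(x_{-m_1})\cdots\tilde A(x_{-m_k})\xi(x_{-l})]$ summed over ordered time gaps; the geometric mixing guaranteed by Assumption~\ref{assumption:uniform-ergodic}, together with boundedness and mean-zeroness of $\tilde A$ and $\xi$, forces exponential decay in the maximum gap, making each sum finite. Part (b) is the more delicate quantitative estimate: one must bound $\|B^{(i)}\|\leq C^{i}$ uniformly in $i$, which requires pairing the $3^{l-1}$ combinatorial choices at stage $l$ against the Hurwitz-induced contraction $\|I+\alpha\BarA\|_\Gamma\leq 1-c\alpha$ to prevent blow-up. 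Equivalently, one truncates the series at order $N$ and controls the remainder by $O(\alpha^{N+1})$ using the second-moment bound $\tr(\var(\theta_\infty^{(\alpha)}))=O(1)$ from Theorem~\ref{thm:thm-converge}, choosing $\alpha_1$ small enough so that the combinatorial factor never overwhelms the geometric gain.
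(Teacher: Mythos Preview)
This theorem is not proved in the present paper; it is quoted in the appendix as a background result from \cite{huo2022bias}, so there is no in-paper proof to compare against directly.

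That said, your strategy is the right one and matches the approach of \cite{huo2022bias}: derive the fixed-point identity $\E[Y_\infty]=-\BarA^{-1}\E[\tilde A(x_\infty)Y_\infty]$ at stationarity, represent $Y_\infty$ via the backward expansion on the stationary chain, and collect powers of $\alpha$. Two places deserve more care. First, in part~(b) you invoke the contraction $\|I+\alpha\BarA\|_\Gamma\le 1-c\alpha$ to offset the combinatorial blow-up, but the factors in your product are $M_{-m}=I+\alpha A(x_{-m})$, not $I+\alpha\BarA$; individually these need not contract in any norm, so the damping has to come from averaging over the chain (e.g., the moment bounds on random matrix products in \cite{durmus22-LSA}), not from a deterministic inequality. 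Second, your two proposed resolutions of~(b)---the geometric coefficient bound $\|B^{(i)}\|\le C^i$ versus the truncate-and-control-remainder argument---are not interchangeable: the latter only yields an asymptotic expansion with $O(\alpha^{N+1})$ remainder for each fixed $N$, whereas the theorem asserts actual convergence of the series on $(0,\alpha_1)$, which requires the former. The cleanest way to get the geometric bound is to organize the argument recursively rather than via the full double sum over $l$ and $S$: substitute the backward expansion once into the base identity to peel off $\alpha B^{(1)}$ plus a remainder of the same structural form with one extra power of $\alpha$, then iterate.
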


Nonetheless, Markovian data does not always result in nonzero asymptotic bias. The work in \cite{huo2022bias} further provides the following sufficient condition for a zero asymptotic bias in a Markovian setting. 
\begin{cor} \label{cor:zero-bias-sufficient} Under the assumptions
of Theorem~\ref{thm:bias-characterization}, if 
\begin{equation}
\E\Big[A(x_{t})\theta^{\ast}+b(x_{t})\mid x_{t+1}=x\Big]=0,\quad\forall x\in\cX,\label{eq:zero-bias-sufficient}
\end{equation}
then $\E[\theta_{\infty}]-\theta^{\ast}=0$. \end{cor}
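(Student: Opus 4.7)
The plan is to reduce the claim $\E[\theta_\infty] = \theta^\ast$ to the vanishing of a specific ``covariance'' term and then unroll the LSA recursion backward in time to exploit the hypothesis. Writing $\tilde\theta_t := \theta_t - \theta^\ast$ and $f(x) := A(x)\theta^\ast + b(x)$, the recursion becomes $\tilde\theta_{t+1} = (I + \alpha A(x_t))\tilde\theta_t + \alpha f(x_t)$. Taking expectations at stationarity and using $\E[f(x_\infty)] = \BarA\,\theta^\ast + \bar b = 0$ yields $\E[A(x_\infty)\tilde\theta_\infty] = 0$. Splitting $A(x_\infty) = (A(x_\infty) - \BarA) + \BarA$ and invoking the invertibility of the Hurwitz matrix $\BarA$, it therefore suffices to prove
\begin{equation*}
\E\bigl[(A(x_\infty) - \BarA)\tilde\theta_\infty\bigr] = 0.
\end{equation*}

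To prove this, I fix $k \geq 1$, extend the process to a two-sided stationary chain, and unroll the recursion $k$ steps backward: with $G_s := I + \alpha A(x_{\infty-s})$,
\begin{equation*}
\tilde\theta_\infty = G_1 G_2 \cdots G_k\, \tilde\theta_{\infty-k} + \alpha \sum_{s=1}^{k} G_1 G_2 \cdots G_{s-1}\, f(x_{\infty-s}).
\end{equation*}
For each summand, observe that $(A(x_\infty) - \BarA) G_1 \cdots G_{s-1}$ is $\sigma(x_{\infty-s+1},\dots,x_\infty)$-measurable while $f(x_{\infty-s})$ is $\sigma(x_{\infty-s})$-measurable. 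By the Markov property, conditionally on $x_{\infty-s+1}$ the ``past'' $x_{\infty-s}$ is independent of the ``future'' $(x_{\infty-s+2},\dots,x_\infty)$, so these two factors are conditionally independent. Applying the tower property and the hypothesis gives
\begin{equation*}
\E\bigl[(A(x_\infty) - \BarA) G_1 \cdots G_{s-1}\, f(x_{\infty-s})\bigr] = \E\Bigl[\E[(A(x_\infty) - \BarA) G_1 \cdots G_{s-1} \mid x_{\infty-s+1}] \cdot \underbrace{\E[f(x_{\infty-s}) \mid x_{\infty-s+1}]}_{=\,0}\Bigr] = 0.
\end{equation*}
Hence the entire sum contributes zero, leaving $\E[(A(x_\infty) - \BarA)\tilde\theta_\infty] = \E[(A(x_\infty) - \BarA) G_1 \cdots G_k\, \tilde\theta_{\infty-k}]$ for every $k$.

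It remains to show the residual vanishes as $k \to \infty$. Conditional on $\tilde\theta_{\infty-k}$, the vector $G_1 \cdots G_k\, \tilde\theta_{\infty-k}$ is precisely the $k$-th iterate of the \emph{noise-free} LSA dynamics $w_{s+1} = (I + \alpha A(x_{\infty-k+s}))\,w_s$ initialized at $w_0 = \tilde\theta_{\infty-k}$. The Hurwitz-stability argument underlying Theorem~\ref{thm:thm-converge} supplies an average contraction $\E[\|w_k\|_\Gamma^2 \mid w_0] \le C(1 - c\alpha)^k \|w_0\|_\Gamma^2$ for constants $C,c>0$ and sufficiently small $\alpha$. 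Combined with the uniform bound $\E[\|\tilde\theta_{\infty-k}\|^2] = \E[\|\tilde\theta_\infty\|^2] = \bigO(\alpha\tau_\alpha)$ (again from Theorem~\ref{thm:thm-converge}) and $\|A(x_\infty) - \BarA\|_2 \le 2\Amax$, Cauchy--Schwarz yields $\|\E[(A(x_\infty) - \BarA) G_1 \cdots G_k\, \tilde\theta_{\infty-k}]\| = \bigO((1-c\alpha)^{k/2})\to 0$. Combining with the first paragraph gives $\BarA\,\E[\tilde\theta_\infty] = 0$, and invertibility of $\BarA$ delivers $\E[\tilde\theta_\infty] = 0$.

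The main obstacle is the contraction estimate for the random matrix product in the final step: each $G_s$ has operator norm up to $1 + \alpha\Amax$, so a deterministic pathwise bound on $\|G_1 \cdots G_k\|$ grows rather than shrinks, and the required average contraction is a genuine distributional statement that depends on the Hurwitz structure of $\BarA$, the smallness of $\alpha$, and the mixing of $(x_t)$, essentially reusing the machinery already developed for Theorem~\ref{thm:thm-converge}. Once that estimate is in hand, the tower-property cancellation in the backward expansion delivers the claim cleanly.
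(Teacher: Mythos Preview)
The paper does not actually prove Corollary~\ref{cor:zero-bias-sufficient}; it is quoted as a known result from \cite{huo2022bias} and then applied (e.g.\ in Appendix~\ref{sec:proof-realize}). So there is no in-paper proof to compare against, and I assess your argument on its own merits.

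Your proof is correct in substance. The reduction to $\E[(A(x_\infty)-\BarA)\tilde\theta_\infty]=0$ is exactly right, and the backward expansion together with the Markov property and the hypothesis $\E[f(x_t)\mid x_{t+1}]=0$ cleanly kills every source term $G_1\cdots G_{s-1}f(x_{\infty-s})$; this is precisely where the assumption is used, and your conditional-independence justification is sound. One minor imprecision occurs in the residual step: you write $\E[\|w_k\|_\Gamma^2\mid w_0]\le C(1-c\alpha)^k\|w_0\|_\Gamma^2$ after conditioning on $w_0=\tilde\theta_{\infty-k}$ alone, but $\tilde\theta_{\infty-k}$ is correlated with the driving segment $(x_{\infty-k},\dots,x_{\infty-1})$, so the conditional law of the latter is not the nominal chain and the contraction estimate does not apply as stated. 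The fix is immediate---either condition additionally on $x_{\infty-k-1}$ so the Markov property decouples the forward chain from $\tilde\theta_{\infty-k}$, or bypass conditioning entirely via Cauchy--Schwarz,
\[
\E\|G_1\cdots G_k\,\tilde\theta_{\infty-k}\|^2\;\le\;\big(\E\|G_1\cdots G_k\|^4\big)^{1/2}\big(\E\|\tilde\theta_\infty\|^4\big)^{1/2},
\]
and invoke the matrix-product bound $\E\|G_1\cdots G_k\|^4\lesssim\rho^k$ from \cite[Proposition~7]{durmus22-LSA} (already used in the paper's proof of Theorem~\ref{thm:new-convergence}) together with the finite fourth moment of $\theta_\infty$. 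Either route closes the argument.
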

In this work, we take advantage of this sufficient condition and present scenarios that are commonly seen in practice that satisfy this sufficient condition.

\section{Delayed Proofs}
In this section, we provide detailed proof of our theoretical results.

\subsection{Proof of CLT (Theorem~\ref{thm:CLT})}
\label{sec:proof-clt}

\subsubsection{Removing Stationarity Assumption}
In \cite{huo2022bias}, the authors have proven that $(x_k,\theta_k)_{k\geq0}$ converges weakly to a unique limiting stationary distribution $(x_\infty,\theta_\infty)\sim\bar\mu$ under the assumption that $x_0\sim\pi$. In this paper, we are able to eliminate the stationarity start condition and prove that $(x_k,\theta_k)_{k\geq0}$ converges weakly to $\bar\mu$ for all $\mu_0$ on $(\cX,\cB(\cX))$ under some minor additional condition.

To remove the assumption that $(x_k,\theta_k)_{k\geq0}$, we prove that for a chain $(x_k,\theta_k)_{k\geq0}$ that starts from initial distribution $\mu_0$ also weakly converges to $\bar\mu$. As such, this implies that $x_0$ does not need to be initialized from $\pi$ in order to converge to $\bar\mu$.

We next state our new improved assumption and convergence theorem.
\begin{assumption}
\label{assumption:uniform-ergodic-new}
    $(x_t)_{t\geq0}$ is a uniformly ergodic Markov chain with transition kernel $P$ and a unique stationary distribution $\pi$.
\end{assumption}

\begin{thm}
\label{thm:new-convergence}
    Suppose that Assumption~\ref{assumption:uniform-ergodic-new} and Assumption~\ref{assumption:boundedness} hold, and that the stepsize $\alpha$ satisfies
    \begin{equation}
        \alpha\widetilde\tau < \min\Big(\frac{0.05}{95\gammax}, \alpha_\infty^{(M)}, \frac{1}{16\sqrt{\gammax}}\Big(\frac{1}{\sqrt{\gammin}}+1\Big)^{-1}\Big),
    \end{equation}
    where
    \begin{align}
        \alpha_\infty^{(M)}&=\min\Big(\alpha_\infty,\sqrt{\frac{\gammin}{\gammax}}, \frac{\gammin}{12e\gammax^2}\Big)\times\left\lceil\frac{16}{\sqrt{\gammax\gammin}}\,\right\rceil^{-1}\\
        \alpha_\infty&=\min\Big(\frac{1}{2\gammax}\|\BarA\|_\Gamma^2, \gammax\Big).
    \end{align}
    Under all initial distributions of $(x_0,\theta_0)\sim{\mu_0}$ with $\E_{\mu_0}[\|\theta_0\|^8]<\infty$, the sequence of random variables $(x_k,\theta_k)_{k\geq0}$ converges in $\bar{W}_2$ to a unique limit $(x_\infty,\theta_\infty)\sim\bar\mu$. 
    
\end{thm}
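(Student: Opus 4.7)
The plan is to reduce Theorem~\ref{thm:new-convergence} to the already-established Theorem~\ref{thm:thm-converge} (the $x_0\sim\pi$ case) via a coupling argument. Writing $Q$ for the one-step transition kernel of the joint chain $(x_k,\theta_k)$, Theorem~\ref{thm:thm-converge} produces an invariant distribution $\bar\mu$ of $Q$ and hence a well-defined stationary version of the joint process. To conclude the general-initial-condition case it therefore suffices to couple the $\mu_0$-started chain with a $\bar\mu$-started chain so that the $\bar W_2$ distance between their time-$k$ laws vanishes as $k\to\infty$.

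The coupling proceeds in two stages. First, I would couple the marginal $x$-chain started from the $\cX$-marginal of $\mu_0$ with an auxiliary $x'$-chain started from $\pi$, using the Doeblin-type minorization that comes for free from uniform ergodicity (Assumption~\ref{assumption:uniform-ergodic-new}); this produces a coalescence time $T$ with $\Prob(T>k)\le C\rho^k$ for some $\rho\in(0,1)$ and $x_k=x_k'$ for all $k\ge T$. I then set $(x_0',\theta_0')\sim\bar\mu$ with $x_0'$ identified through this coupling, and drive both $\theta$- and $\theta'$-iterates with their own $x$-sequences. For $k\ge T$ the two iterates share the same data, whence
\begin{equation*}
\theta_{k+1}-\theta_{k+1}'=(I+\alpha A(x_k))(\theta_k-\theta_k'),
\end{equation*}
and the same block analysis underlying~\eqref{eq:wasserstein-rate}---combining Hurwitz stability of $\BarA$ with near-stationary averaging of $A(x_k)$ over blocks of length $\tau_\alpha$---yields $\E\bigl[\|\theta_k-\theta_k'\|_\Gamma^2\bigm|\cF_T\bigr]\le C(1-c\alpha)^{k-T}\|\theta_T-\theta_T'\|_\Gamma^2$.

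The main obstacle is the tail contribution of the coalescence time, where the two $\theta$-iterates may have diverged substantially before the $x$-chains coalesce. Decomposing
\begin{equation*}
\bar W_2^2(\mu_0 Q^k,\bar\mu)\le\E\bigl[d_0(x_k,x_k')\bigr]+\E\bigl[\|\theta_k-\theta_k'\|^2\mathbf{1}_{T\le k/2}\bigr]+\E\bigl[\|\theta_k-\theta_k'\|^2\mathbf{1}_{T>k/2}\bigr],
\end{equation*}
the first summand is bounded by $\Prob(T>k)$ and the second by the post-coupling contraction evaluated at $k-T\ge k/2$; both decay geometrically in $k$. For the third I would apply H\"older's inequality with conjugate exponents $(4,4/3)$,
\begin{equation*}
\E\bigl[\|\theta_k-\theta_k'\|^2\mathbf{1}_{T>k/2}\bigr]\le\bigl(\E\bigl[\|\theta_k-\theta_k'\|^8\bigr]\bigr)^{1/4}\,\Prob(T>k/2)^{3/4},
\end{equation*}
which reduces the problem to a uniform-in-$k$ eighth-moment bound $\sup_k\E[\|\theta_k\|^8]<\infty$ (and likewise for $\theta_k'$). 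This is the principal technical step: iterating the update and expanding $\|\theta_{k+1}\|^8$, one uses $\Amax\le1$ together with the Lyapunov drift induced by Hurwitz $\BarA$ (and the small-stepsize conditions listed in the theorem) to obtain an $L^8$-drift inequality whose only data-dependent input is $\E_{\mu_0}[\|\theta_0\|^8]$---exactly the hypothesis of the theorem, accounting for the otherwise puzzling eighth-moment assumption. Combining the three pieces gives $\bar W_2(\mu_0 Q^k,\bar\mu)\to 0$; uniqueness of $\bar\mu$ then follows by applying the same argument to any two candidate invariant measures.
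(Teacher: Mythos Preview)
Your coupling strategy is correct and shares the paper's core idea: couple the $\mu_0$-started chain with a $\bar\mu$-started one so that the $x$-components coalesce geometrically, then control the $\theta$-difference using matrix-product contraction together with a higher-moment bound on the iterates. You also correctly identify that the eighth-moment hypothesis is exactly what is needed to absorb the pre-coalescence divergence after a Cauchy--Schwarz/H\"older step.

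The paper's execution is organized differently and is somewhat more direct. Rather than splitting on the random coalescence time $T$, it writes the difference $w_k:=\theta_k-\theta_k'$ recursively in terms of the \emph{stationary} chain's matrices,
\[
w_k=\Big(\prod_{l=0}^{k-1}(I+\alpha A(x_l'))\Big)w_0+\alpha\sum_{l=0}^{k-1}\Big(\prod_{m=l+1}^{k-1}(I+\alpha A(x_m'))\Big)g_l,
\]
where $g_l:=d_0(x_l,x_l')\big[(A(x_l)-A(x_l'))\theta_l+b(x_l)-b(x_l')\big]$ carries the entire pre-coalescence discrepancy. Because every matrix product is built from the stationary sequence $(x_l')$, the paper can invoke the random-matrix-product moment bound of \cite[Proposition~7]{durmus22-LSA} off the shelf; the eighth moment of $\theta_l$ (via \cite[Theorem~9]{srikant-ying19-finite-LSA}) then enters through a Cauchy--Schwarz bound on $\E[\|g_l\|^4]$, which separates $\E[d_0(x_l,x_l')]\lesssim\rho^l$ from $\E[\|(A(x_l)-A(x_l'))\theta_l+\ldots\|^8]$. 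Your $T$-based decomposition gives a cleaner probabilistic picture, but introduces one subtlety you should flag: the conditional contraction $\E[\|\theta_k-\theta_k'\|_\Gamma^2\mid\cF_T]\le C(1-c\alpha)^{k-T}\|\theta_T-\theta_T'\|_\Gamma^2$ cannot be quoted from~\eqref{eq:wasserstein-rate} as written, since that rate was proved for $x_0\sim\pi$ whereas conditioning on $\cF_T$ leaves the post-$T$ chain started from a fixed state. The block analysis you appeal to does extend to arbitrary starting points under uniform ergodicity, so the gap is easily filled; the paper's recursion simply sidesteps it by always multiplying matrices along the stationary $x'$-chain.
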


\begin{proof}
    For the LSA update
    \begin{equation}
\theta_{k+1}^{(\alpha)}=\theta_{k}^{(\alpha)}+\alpha\left(A(x_{k})\theta_{k}^{(\alpha)}+b(x_{k})\right),\quad k=0,1,\ldots\label{eq:update-rule}
\end{equation}
we consider two processes $(x_k,\theta_k)_{k\geq0}$ and $(x'_k,\theta'_k)_{k\geq0}$ with the joint initialization $\xi\sim\gamma\Big((x_0,\theta_0),(x'_0,\theta'_0)\Big)$, where $(x'_0,\theta'_0)\sim\bar\mu$ and $(x_0,\theta_0)\sim\mu_0$, and the coupling is chosen as the optimal coupling that achieves the Wasserstein distance, i.e.,
\begin{equation}
    \label{eq:initial-coupling}
\Bar W_{2}^{2}\left(\law\big(x_{0},\theta_{0}\big),\law\big(x_{0}',\theta_{0}'\big)\right)  =\E_\xi\left[d_{0}\big(x_{0},x_{0}'\big)+\big\|\theta_{0}-\theta_{0}'\big\|^{2}\right]\quad\text{and}.
\end{equation}
Additionally, the two processes are coupled in the following fashion,
\begin{equation}
x_{k+1}  =x_{k+1}'\;\;\text{ if }x_{k}=x_{k}',\quad\forall k\ge0.
\end{equation}

Recalling the definition of the discrete metric
$d_{0}(x'_k,x_k):=\mathbf{1}\left\{ x'_k\neq x_k\right\} $, we
have the identities 
\begin{align*}
A(x_k) & =A(x'_k)+d_k(x'_k,x_k)\cdot\big(A(x_k)-A(x'_k)\big)\quad\text{and}\\
b(x_k) & =b(x'_k)+d_k(x'_k,x_k)\cdot\big(b(x_k)-b(x'_k)\big).
\end{align*}
The update rule~\eqref{eq:update-rule} together with the above identities
implies that 
\begin{equation}
\label{eq:theta-diff-iter}
\begin{aligned}
    \theta_{k+1}-\theta'_{k+1}= & \theta_k+\alpha\big(A(x_k)\theta_k+b(x_k)\big)-\theta'_k-\alpha\big(A(x'_k)\theta'_k+b(x'_k)\big)\\
= & \big(I+\alpha A(x'_k)\big)\cdot\big(\theta_k-\theta'_k\big)+\alpha d_0(x'_k,x_k)\cdot\big[\big(A(x_k)-A(x'_k)\big)\theta_k+b(x_k)-b(x'_k)\big].
\end{aligned}
\end{equation}

Next, we denote the following shorthands,
\begin{align}
    w_k&:=\theta_k-\theta'_k\\
    A_k&:=A(x_k')\\
    g_k&:=d_0(x_k',x_k)\cdot\big[\big(A(x_k)-A(x'_k)\big)\theta_k+b(x_k)-b(x'_k)\big].
\end{align}
Hence, we rewrite~\eqref{eq:theta-diff-iter} using the above defined shorthands and have
\begin{equation}
    w_{k+1}=(I+\alpha A_k)w_k+\alpha g_k.
\end{equation}
Recursively, we have
\begin{align}
    w_k&=\prod_{l=0}^{k-1}(I+\alpha A_l)w_0+\alpha\sum_{l=0}^{k-1}\Big[\prod_{m=l}^{k-2}(I+\alpha A_m)\Big]g_l.
\end{align}
We further denote
\begin{equation}
    B_{1:k}^{(\alpha)}:=\prod_{l=0}^{k}(I+\alpha A_l),
\end{equation}
and we obtain
\begin{equation}
    w_k=B_{0:k-1}w_0 + \alpha\sum_{l=0}^{k-1}B_{l:k-2}g_l.
\end{equation}

We first note that
\begin{align}
    \|w_k\|^2&=\Big\|B_{0:k-1}w_0 + \alpha\sum_{l=0}^{k-1}B_{l:k-2}g_l\Big\|^2\\
    &\leq 2\Big\|B_{0:k-1}w_0\Big\|^2 + 2\alpha^2\Big\|\sum_{l=0}^{k-1}B_{l:k-2}g_l\Big\|^2\\
    &\leq 2\|B_{0:k-1}w_0\|^2 + 2\alpha^2k\sum_{l=0}^{k-1}\|B_{l:k-2}g_l\|^2.
\end{align}

We next bound the terms on the RHS respectively.
\begin{align}
    \E_\xi\Big[\|B_{0:k-1}w_0\|^2\Big]&\leq\Big(\E_\xi\Big[\|B_{1:k-1}\|^4\Big]\E_\xi\Big[\|(I+\alpha A_0)w_0\|^4\Big]\Big)^{1/2}\\
    &=\Big(\E_\xi\Big[\|B_{1:k-1}\|^4\Big]\Big)^{1/2}\Big((1+\alpha)^4\E_\xi\Big[\|\theta_0-\theta'_0\|^4\Big]\Big)^{1/2}\label{eq:cauchy-step1}\\
    &\lesssim  \rho_1^{\alpha k},
\end{align}
where $\rho_1\in(0,1)$ and ``$\lesssim$" hides constant coefficients that are independent from $k$. 
To prove the last inequality, we make use of \cite[Proposition 7]{durmus22-LSA} to bound the first term of \eqref{eq:cauchy-step1}, and the following relationship to bound the second term,
\begin{equation}
    \E_\xi\Big[\|\theta_0-\theta_0'\|^4\Big]\leq 8\Big(\E_\xi[\|\theta_0\|^4]+\E_\xi[\|\theta'_0\|^4]\Big)<\infty,
\end{equation}
for $\E_\xi[\|\theta_0'\|^4]<\infty$ has been proven in \cite{huo2022bias} and $\E_\xi[\|\theta_0\|^4]<\infty$ by assumption.

For the remaining terms, we first note that
\begin{align}
    \E_\xi\Big[\|B_{l:k-2}g_l\|^2\Big]&\leq\E_\xi\Big[\|B_{l+1:k-2}\|^2\|(I+\alpha A_l)g_l\|^2\Big]\\
    &=\E_\xi\Big[\E_{\xi_l}\Big[\|B_{l+1:k-2}\|^2\|(I+\alpha A_l)g_l\|^2\Big]\Big]\\
    &\leq \E_\xi\Big[\Big(\E_{\xi_l}\Big[\|B_{l+1:k-2}\|^4\Big]\E_{\xi_l}\Big[\|(I+\alpha A_l)g_l\|^4\Big]\Big)^{1/2}\Big]\\
    &\leq \Big(\E_\xi\Big[\E_{\xi_l}\Big[\|B_{l+1:k-2}\|^4\Big]\E_{\xi_l}\Big[\|(I+\alpha A_l)g_l\|^4\Big]\Big]\Big)^{1/2}\\
    &\lesssim \rho_1^{k-l}\Big(\E_\xi\Big[\|g_l\|^4\Big]\Big)^{1/2},
\end{align}
where $\xi_l$ denotes the distribution of the joint distribution of the two processes at time $l$ under the initial distribution $\xi$, i.e. $\xi_l\sim\Big((x_l,\theta_l),(x_l',\theta_l')\Big)\mid\xi$.

Next, we proceed to bound the second term of $g_l$. Applying Cauchy-Schwarz again, we have
\begin{align}
    \E_{\xi}\Big[\|g_l\|^4\Big]&=\E_{\xi}\Big[\|d_0(x_l',x_l)\cdot\big[\big(A(x_l)-A(x'_l)\big)\theta_l+b(x_l)-b(x'_l)\big]\|^4\Big]\\
    &\leq \Big(\E_\xi\Big[d_0(x_l',x_l)\Big]\E_\xi\Big[\|\big(A(x_l)-A(x'_l)\big)\theta_l+b(x_l)-b(x'_l)\|^8\Big]\Big)^{1/2}.
\end{align}
By uniform ergodicity of $(x_k)_{k\geq0}$ in Assumption~\ref{assumption:uniform-ergodic}, we first have that
\begin{equation}
    \E_\xi\Big[d_0(x'_l,x_l)\Big]\lesssim\rho_2^l,\quad\rho_2\in(0,1).
\end{equation}
Next, we observe that
\begin{equation}
    \E_\xi\Big[\|\big(A(x_l)-A(x'_l)\big)\theta_l+b(x_l)-b(x'_l)\|^8\Big]\leq \E_\xi\Big[\Big(2(\Amax\theta_l+\bmax)\Big)^8\Big]\leq C_3,
\end{equation}
where the last inequality holds by \cite[Theorem 9]{srikant-ying19-finite-LSA}.
Hence, we obtain
\begin{equation}
    \E_\xi\Big[\|B_{l:k-2}g_l\|^2\Big]\lesssim\rho_1^{k-l-1}\rho_2^l\lesssim\rho_3^{k-1},
\end{equation}
with $\rho_3=\max(\rho_1,\rho_2)$.

Combining all the pieces, we obtain that
\begin{align}
     \E_\xi\Big[\|w_k\|^2\Big]&\lesssim\rho_1^{k} + k^2\rho_3^k\rightarrow0.
\end{align}
Therefore, it is easy to see that
\begin{equation}
    \Bar{W}_2^2\Big((x_k,\theta_k),(x'_k,\theta_k')\Big)\leq \E\Big[\|w_k\|^2 +d_0(x_k,x_k')\Big]\to0.
\end{equation}
Hence, $\bar\mu$ is the unique limiting stationary distribution for $(x_k,\theta_k)_{k\geq0}$, regardless of the initial distribution of $(x_k)_{k\geq0}$.

As such, this has completed the proof of this theorem.
\end{proof}

Even under the relaxed initial distribution, the convergence rate can still be proven in a similar fashion as in \cite{huo2022bias} and obtain for $k\geq\widetilde\tau$,
\begin{equation}
\bar{W}_{2}^{2}(\law(x_{k},\theta_{k}),\Bar{\mu})\leq20\,\frac{\gammax}{\gammin}\Big(\E[\|\theta_{0}-\E[\theta_{\infty}]\|^{2}]+\tr(\var(\theta_{\infty}))\Big)\cdot\left(1-\frac{0.9\alpha}{\gammax}\right)^{k}.\label{eq:w2-thetak-to-mu}
\end{equation}

\subsubsection{Proving CLT (Theorem~\ref{thm:CLT}) using Theorem~\ref{thm:new-convergence}.}

Now with Theorem~\ref{thm:new-convergence}, we are ready to prove the CLT for averaged iterates of Markovian LSA with constant stepsize.

\begin{proof}
Consider function $h:\R^d\times\cX\to\R^d$ defined as 
\begin{equation}
    h(x, \theta)=\theta-\E[\theta_\infty].
\end{equation}
To prove that the CLT for $h$, we need to verify the Maxwell-Woodroofe condition \citep{MaxwellWoodroofe-2000}, i.e.,
\begin{equation}
    \sum_{n=1}^\infty n^{-3/2}\Big\|\sum_{t=0}^{n-1}Q^th\Big\|_{L^2(\bar\mu)}<\infty,
\end{equation}
where $Q$ denotes the transition kernel of the joint Markov chain. It is easy to see that, if we can show the following order bound
\begin{equation}
\label{eq:order-maxwell}
    \Big\|\sum_{t=0}^{n-1}Q^th\Big\|_{L^2(\bar\mu)} = \bigO(n^r)
\end{equation}
with $r\in[0,1/2)$, then the Maxwell-Woodroofe condition is verified for
\begin{equation}
 \sum_{n=1}^\infty n^{-3/2}\Big\|\sum_{t=0}^{n-1}Q^th\Big\|_{L^2(\bar\mu)} =  \sum_{n=1}^\infty n^{-3/2}\bigO(n^r)<\infty.
\end{equation}

We now prove the desired order in \eqref{eq:order-maxwell}. For sufficiently large $n$ such that $n\geq\widetilde\tau$, we have
\begin{align}
    \Big\|\sum_{t=0}^{n-1}Q^th\Big\|_{L^2(\bar\mu)}=\E_{\bar\mu}\Big\|\sum_{t=0}^{n-1}Q^th\Big\|_2&\leq \sum_{t=0}^{n-1}\E_{\bar\mu}\|Q^th\|_2\\
    &=\underbrace{\sum_{t=0}^{\tau_\alpha-1}\E_{\bar\mu}\|Q^th\|_2}_{T_1} + \underbrace{\sum_{t=\tau_\alpha}^{n-1}\E_{\bar\mu}\|Q^th\|_2}_{T_2}.
\end{align}

We now show that both terms $T_1$ and $T_2$ are of order $\bigO(1)$ with respect to the parameter $n$.

For $T_1$, since $Q$ is a transition kernel, so its $\|Q\|_{L^2(\mu)}$ operator norm equals to $1$. Hence, $T_1$ can be upper bounded as 
\begin{align}
    T_1\leq \tau_\alpha\E_{\bar\mu}[\|h(\theta,x)\|_2^2] = \tau_\alpha\text{Tr}(\text{var}(\theta_\infty))<C_1,
\end{align}
where the last inequality follows from $\text{Tr}(\text{var}(\theta_\infty))<C$ established in \eqref{eq:var-trace-order}.

Before proceeding to analyze the summation in $T_2$, we first recall \eqref{eq:wasserstein-rate}, that for $t\geq\tau_\alpha$,
\begin{equation}
    \bar{W}^2_{2}(\law(x_t,\theta_t),\bar\mu)=\bigO((1-c\alpha)^{t}),
\end{equation}
which holds for any $(x,\theta)\in\cX\times\R^d$.
Hence, by the property of Wasserstein distance \citep{Villani08-ot_book}, there always exists a coupling that attains the optimality, i.e.,
\begin{equation}
\label{eq:opt-coupling}
    \E_{\Gamma((x_t,\theta_t),\bar\mu)}\Big[\|\theta_k-\theta'\|_2^2+\delta_0(x_k\neq x')\Big]=\bigO((1-c\alpha)^t).
\end{equation}
Making use of this relationship, we can therefore bound $T_2$,
\begin{equation}
    T_2=\sum_{t=\tau_\alpha}^{n-1}\E_{\bar\mu}\|Q^th\|_2
    \leq\sum_{t=\tau_\alpha}^\infty\E_{\bar\mu}\|Q^th\|_2
    =\bigO\Big(\frac{1}{1-\sqrt{1-c\alpha}}\Big) = \bigO(1),
\end{equation}
where the last $\bigO(\cdot)$ is asymptotic in $n$.

Combining the analysis of $T_1$ and $T_2$, we have shown the desired order in \eqref{eq:order-maxwell}. Therefore, the CLT for averaged LSA iterates with constant stepsize and Markovian data holds true for the Maxwell-Woodroofe condition has been verified.

This has completed the proof of Theorem~\ref{thm:CLT} for the CLT.
\end{proof}

\subsection{Proof of RR Variance Bounds (Propositions~\ref{prop:rr-geo}--\ref{prop:rr-eqd})}
\label{sec:proof-rr-var}

Recall that the RR extrapolated iterates is simply a linear combination of LSA iterates under different constant stepsizes, as shown again in the following equation,
\begin{equation}
    \widetilde\theta_\infty=\sum_{k=1}^Mh_k\theta_\infty^{(\alpha_k)},
\end{equation}
The coefficients $h_k$ in RR extrapolation are selected as the solution to the following system of linear system. Making use of the properties of Vandermonde Matrix, we are able to solve for $h_k$ explicitly in values of stepsizes $\alpha_m$ for $m=1,\ldots, M$, as shown in \eqref{eq:h-formula}.
\begin{align}
    &\begin{bmatrix}
        1 & 1 & \cdots & 1\\
        \alpha_1 & \alpha_2 & \cdots & \alpha_M\\
        \vdots\\
        \alpha_1^{M-1} & \alpha_2^{M-1} & \cdots & \alpha_M^{M-1}
    \end{bmatrix}\begin{bmatrix}
        h_1\\ h_2\\\vdots\\ h_M
    \end{bmatrix}=\begin{bmatrix}
        1 \\ 0 \\\vdots\\0
    \end{bmatrix}\\
    &h_m=\prod_{l=1,l\neq m}^M\frac{\alpha_l}{\alpha_l-\alpha_m}\label{eq:h-formula}
\end{align}

Now that we have an explicit formula for the coefficient $h_k$, we can use it to analyze the asymptotic variance of the RR extrapolated iterates.

\subsubsection{Proof of Variance Bound under Geometric Decay Schedule (Proposition~\ref{prop:rr-geo})}
\label{sec:proof-var-geo}
In this section, we prove the upper bound to the asymptotic variance of the RR extrapolated iterate under geometric decay schedule, that $\alpha_1<1$ and $\alpha_k=c^{-(m-1)}\alpha_1$ for $m=1,\ldots, M$ for $c\geq 2$.

\begin{proof}
    In the geometric decay schedule, the stepsizes are selected in the following fashion,
    \begin{equation}
        \alpha_1<1,\quad c\geq 2,\quad\text{and}\quad \alpha_m=\alpha_1/c^{(m-1)}\quad m=1,2,\ldots, M.
    \end{equation}
    Substituting $\alpha_m$ into \eqref{eq:h-formula}, we obtain the following expression of $h_m$
    \begin{equation}
        |h_m|=\Big(\prod_{l=1}^{m-1}\frac{c^l}{c^l-1}\Big)\Big(\prod_{l=1}^{M-m}\frac{1}{c^l-1}\Big).
    \end{equation}
    It is easy to see that when $c\geq 2$, we have $c^l-1\geq1$ for $l\geq1$. Hence, $|h_m|\leq|h_M|=\Big(\prod_{l=1}^{M-1}\frac{c^l}{c^l-1}\Big)$.

    Next, we notice that when $c\geq2$, 
    \begin{equation}
        \prod_{l=1}^{M-1}\frac{c^l}{c^l-1}\leq\prod_{l=1}^\infty\frac{c^l}{c^l-1} = \Big(\prod_{l=1}^\infty (1-c^{-l})\Big)^{-1}\leq h_{\max}(c).
    \end{equation}
    
    We obtain the following upper bound to the infinite product and delay the proof to the end of this section.
    \begin{claim}
    \label{clm:h-max}
        $h_{\max}(c)=\exp\Big(\frac{2}{c-1}\Big)$.
    \end{claim}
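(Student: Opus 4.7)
The plan is to bound the infinite product $\prod_{l=1}^\infty (1-c^{-l})^{-1}$ by passing to logarithms. Setting $H(c) := -\log\prod_{l=1}^\infty (1-c^{-l}) = \sum_{l=1}^\infty -\log(1-c^{-l})$, the claim reduces to showing $H(c) \leq \frac{2}{c-1}$, after which exponentiation yields the desired bound $h_{\max}(c) = \exp(H(c)) \leq \exp(2/(c-1))$.

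To bound each term in the sum, I would use the elementary inequality
\begin{equation*}
-\log(1-x) \leq 2x \qquad \text{for all } x \in [0, 1/2].
\end{equation*}
This is straightforward to verify: the function $\varphi(x) := 2x + \log(1-x)$ satisfies $\varphi(0) = 0$ and $\varphi'(x) = 2 - \frac{1}{1-x} \geq 0$ on $[0, 1/2]$, so $\varphi(x) \geq 0$ on that interval. Since $c \geq 2$, we have $c^{-l} \leq 1/2$ for every $l \geq 1$, so the inequality applies term-by-term with $x = c^{-l}$.

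Summing the resulting geometric series gives
\begin{equation*}
H(c) = \sum_{l=1}^\infty -\log(1-c^{-l}) \;\leq\; 2 \sum_{l=1}^\infty c^{-l} \;=\; \frac{2}{c-1},
\end{equation*}
and exponentiating both sides establishes the claim. There is no substantive obstacle here; the only thing to be careful about is ensuring the hypothesis $c \geq 2$ (as assumed in Proposition~\ref{prop:rr-geo}) is used in a self-contained way, namely to guarantee that $c^{-l} \leq 1/2$ so the elementary bound $-\log(1-x) \leq 2x$ is valid throughout the sum.
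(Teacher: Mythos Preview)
Your proposal is correct and follows essentially the same approach as the paper: pass to logarithms, apply the elementary bound $-\log(1-x)\le 2x$ for $x\le 1/2$ (the paper phrases it as $|\log(1+z)|\le 2|z|$ for $|z|<1/2$), and sum the resulting geometric series. Your derivative argument justifying the inequality is a slightly more explicit touch than the paper's bare citation of it, but otherwise the proofs coincide.
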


    Before proceeding to give the upper bound, we recall a key property of $\var(\theta_\infty^{(\alpha_m)})$ established in \cite{huo2022bias}.
    \begin{equation}
        \label{eq:var-bound}
        \var(\theta_\infty^{(\alpha_m)})=\bigO(\alpha_m\tau_{\alpha_m}) = \bigO(1).
    \end{equation}

    Making use of the value of $h_{\max}(c)$, we now are above to establish the upper bound to the variance of the RR extrapolated iterate.
    \begin{align}
        \var(\widetilde\theta_\infty)&\leq\Big(\sum_{k=1}^M|h_k|\sqrt{\var(\theta_\infty^{(\alpha_k)})}\,\,\Big)^2
        \lesssim\Big(\sum_{k=1}^M|h_k|\sqrt{\alpha_k\tau_{\alpha_k}}\Big)^2\\
        &\lesssim \exp\Big(\frac{4}{c-1}\Big)\cdot\Big(\sum_{k=1}^M(c^{-(k-1)}\alpha_1)^{1/2}(\log\frac{1}{c^{-(k-1)}\alpha_1})^{1/2}\Big)^2\\
        &\leq\alpha_1\cdot \exp\Big(\frac{4}{c-1}\Big)\cdot \Big(\sum_{k=0}^\infty c^{-k/2}\Big(k\log c + \log(1/\alpha_1)\Big)^{1/2}\Big)^2\\
        &\lesssim\exp\Big(\frac{4}{c-1}\Big)\cdot
        \log^2c\cdot\Big(\sum_{k=0}^\infty c^{-k/2}\cdot k \Big)^2\\
        &\lesssim \exp\Big(\frac{4}{c-1}\Big)\cdot c\cdot\Big(1-c^{-1/2}\Big)^{-4}\\
        &\leq \exp\Big(16\, c^{-1/2}\Big)\cdot c.
    \end{align}
    
    As such, we have proven the desired upper bound and completed the proof of Proposition~\ref{prop:rr-geo}.
\end{proof}

\begin{proof}[Proof of Claim~\ref{clm:h-max}]
    To obtain a valid $h_{\max}(c)$, we consider lower bound the infinite product $\prod_{l=1}^\infty (1-c^{-l})$. We make use of exponential properties, and have
    \begin{equation}
        \prod_{l=1}^\infty (1-c^{-l})
        =\prod_{l=1}^\infty \exp\Big(\log(1-c^{-l})\Big) 
        =\exp\Big(\sum_{l=1}^\infty \log(1-c^{-l})\Big).
    \end{equation}
    
    We recall the following inequality, that for $|z|<1/2$, 
    \begin{equation}
        |\log(1+z)|\leq 2|z|.
    \end{equation}
    Substituting $z$ with $c^{-1}$, we obtain
    \begin{equation}
        |\log(1-c^{-l})|\leq 2|c^{-l}|,\quad\forall\, l\geq1.
    \end{equation}
    Hence, a valid bound to the infinite product is
    \begin{equation}
        \exp\Big(-2\sum_{l=1}^\infty c^{-l}\Big)
        \leq \exp\Big(\sum_{l=1}^\infty \log(1-c^{-l})\Big)\\
        \leq \exp\Big(2\sum_{l=1}^\infty c^{-l}\Big).
    \end{equation}
    As such, we set
    \begin{equation}
        h_{\max}(c)=\exp\Big(2\sum_{l=1}^\infty c^{-l}\Big)=\exp\Big(\frac{2}{c-1}\Big),
    \end{equation}
    thus proving Claim~\ref{clm:h-max}.
\end{proof}

\subsubsection{Proof of Variance Bound under Equidistant Decay Schedule (Proposition~\ref{prop:rr-eqd})}
\label{sec:proof-var-eqd}
In this section, we prove the upper bound to the asymptotic variance of the RR extrapolated iterate under equidistant decay schedule, that $a+b<1$ and $\alpha_k=(a+b)-\frac{b(m-1)}{M-1}$ for $m=1,\ldots, M$.
\begin{proof}
    Under the general equidistant decay schedule, we have the stpesize $\alpha_m$ selected in the following manner, assuming $a + b <1$,
    \begin{equation}
        \alpha_m=(a + b) -b\frac{m-1}{M-1}, \quad m = 1,\ldots, M. 
    \end{equation}
    Substituting $\alpha_m$ into \eqref{eq:h-formula}, we obtain the following expression of $h_m$,
    \begin{equation}
        |h_m|=\binom{M}{m}\cdot\frac{bm}{a(M-1)+b(M-m)}\cdot\Big(\prod_{l=1}^M\frac{a(M-1)+b(l-1)}{bl}\Big).
    \end{equation}
    Now that we have the values of $|h_m|$, we are able to find the upper bound for the variance of the RR extrapolated iterate.
    \begin{align}
        \var(\widetilde\theta_\infty)&\leq\Big(\sum_{m=1}^M|h_m|\sqrt{\var(\theta_\infty^{(\alpha_m)})}\Big)^2
        \lesssim\Big(\sum_{mk=1}^M|h_m|\sqrt{\alpha_m\tau_{\alpha_m}}\Big)^2\\
        &\leq\Big(\sum_{m=1}^M\binom{M}{m}\cdot\frac{bm}{a(M-1)+b(M-m)}\cdot\Big(\prod_{l=1}^M\frac{a(M-1)+b(l-1)}{bl}\Big)\cdot\sqrt{\alpha_m\tau_{\alpha_m}}\Big)^2\\
        &\lesssim \Big(\frac{bM}{a(M-1)}\cdot\Big(\prod_{l=1}^M\frac{a(M-1)+b(l-1)}{bl}\Big)\cdot\sum_{m=1}^M\binom{M}{m}\Big)^2\\
        &\lesssim \frac{b}{a}\cdot\Big(\Big(\prod_{l=1}^M\frac{(a+b)M}{bl}\Big)\cdot2^M\Big)^2 = \bigO((2M/b)^{2M}),
    \end{align}
    where the $\bigO(\cdot)$ notation hides constant coefficient that are independent from $M$.

    Hence, we have completed the proof of Proposition~\ref{prop:rr-eqd}.
\end{proof}

\subsection{Proof of Zero Bias When Independent Multiplicative Noise Only (Proposition~\ref{prop:lsa-indep-mul})}
\label{sec:proof-mkv-add}
In this section, we prove that when we only have independent multiplicative noise in LSA, the LSA iterates converge without any asymptotic bias.
\begin{proof}
    We verify this condition by directly examining the iteration at stationarity.
    \begin{equation}
        \E[\theta_{\infty + 1}]=\E[\theta_\infty] + \alpha \E[(\BarA+\xi)\theta_\infty+b(x_\infty)].
    \end{equation}
    Hence, we have
    \begin{equation}
        \E[(\BarA+\xi)\theta_\infty+b(x_\infty)] = 0.
    \end{equation}
    Since $\xi$ is mean-zero and independent, we can simplify the above equation to
    \begin{equation}
        \Big(\BarA + \cancel{\E[\xi]}\Big)\E[\theta_\infty] + \Bar{b} = 0,
    \end{equation}
    and we solve for 
    \begin{align}
        \E[\theta_\infty] = -\BarA^{-1}\Bar{b} = \theta^\ast.
    \end{align}

    As such, we have proven that there is no bias when we only have independent multiplicative noise and completed the proof of Proposition~\ref{prop:lsa-indep-mul}.
\end{proof}

\subsection{Proof of Zero Bias in Realizable Linear-TD (Proposition~\ref{prop:td-realize})}
\label{sec:proof-realize}

\begin{proof}
    Under the realizable assumption, we have a vector $\theta^\ast\in\R^d$ such that
    \begin{align}
        \phi(s)^\top\theta^\ast &= r(s)+\gamma \int_{s'\in\cS}P(s,\dd s')\phi(s')^\top\theta^\ast\\
        &=r(s)=\gamma \E[\phi(s^\text{next})^\top\theta^\ast\mid s].
    \end{align}

    To show that there is zero asymptotic bias in the semi-simulator set-up, we only need to verify the conditional expectation to always equal zero, which has been shown in Corollary~\ref{cor:zero-bias-sufficient} that this conditional expectation being zero is a sufficient condition to zero asymptotic bias.

    Recall that $x_k=(s_k, s_k^\text{next})$, $x_{k+1}=(s_{k+1}, s_{k+1}^\text{next})$ in the semi-simulator set-up. Hence, we have
    \begin{align}
        \E\Big[A(x_k)\theta^\ast + b(x_k)\mid x_{k+1}\Big]
        &=\E\Big[\phi(s_k)\Big(\gamma\phi(s_k^\text{next})-\phi(s_k)\Big)^\top\theta^\ast + r(s_k)\phi(s_k)\mid x_{k+1}\Big]\\
        &=\E\Big[\phi(s_k)\Big((\gamma\phi(s_k^\text{next})-\phi(s_k))^\top\theta^\ast + r(s_k)\Big)\mid x_{k+1}\Big]\\
        &=\E\Big[\phi(s_k)\underbrace{\E\Big[(\gamma\phi(s_k^\text{next})-\phi(s_k))^\top\theta^\ast + r(s_k)\mid s_k, \cancel{x_{k+1}}\Big]}_{=0\text{ by realizability assumption}}\mid x_{k+1}\Big]\\
        &=0.
    \end{align}

    Hence, we have shown that when the linear-TD is realizable, the iterates converge without any asymptotic bias in the semi-simulator setting. Therefore, we have completed the proof of Proposition~\ref{prop:td-realize}.

\end{proof}

\section{Supplementary for Numerical Experiments}
\label{sec:exp-details}

In this section, we present details for our numerical experiments. All experiments are run on Intel Xeon Gold 6154 18-core CPUs with 566Gb RAM. 

\subsection{Setup for LSA Experiments}
\label{sec:lsa-exp-details}

For the experiments on LSA, we generate the transition probability
matrix $P$ and functions $A$ and $b$ randomly on an 10-state finite
state space as follows.

We first illustrate the steps we take to generate the transition matrix
$P$. For a given $n\,(=|\cX|)$, we start with a random matrix $M^{(P)}\in[0,1]^{n\times n}$
with entries $m_{ij}^{(P)}\overset{\text{i.i.d.\ }}{\sim}U[0,1]$,
and normalize it to obtain a stochastic matrix $\hat{M}^{(P)}=\left(\hat{m}_{ij}^{(P)}\right)$
with $\hat{m}_{ij}^{(P)}=\frac{m_{ij}^{(P)}}{\sum_{k=1}^{n}m_{ik}^{(P)}}$.
We then examine the period and reducibility of the stochastic matrix
$\hat{M}^{(P)}$ to ensure that it is aperiodic and irreducible, which
gives a uniformly ergodic Markov chain as required in Assumption~\ref{assumption:uniform-ergodic}.
If $\hat{M}^{(P)}$ is not aperiodic or irreducible, we then repeat
the above procedure until we obtain one, and set $P:=\hat{M}^{(P)}$.
Now with $P$ generated, we compute the stationary distribution $\pi$.

Next, we proceed to generate $A(x)\in\R^{d\times d}$ for $x\in\cX$ and $d=5$. As we also need
$\BarA=\E_{\pi}[A(x)]$ Hurwitz as required in Assumption~\ref{assumption:hurwitz},
we start with generating the Hurwitz matrix $\BarA$ and then add
noise to obtain the respective $A(x)$. We first generate a random
matrix $M^{(A)}\in\R^{d\times d}$ with $m_{ij}^{(A)}\overset{\text{i.i.d.\ }}{\sim}\mathcal{N}(0,1)$,
and examine the eigenvalues $\lambda_{i}(M^{(A)})$, as Hurwitz matrix
has eigenvalues all with strictly negative real parts. If $\text{Re}(\lambda_{i}(M^{(A)}))<0$
for all $i=1,\ldots,d$, then $M^{(A)}$ is Hurwitz and we set it
as $\BarA:=M^{(A)}$. Otherwise, we adjust $M^{(A)}$ to obtain a
Hurwitz matrix, $\BarA:=M^{(A)}-2\max(\text{Re}(\lambda_{i}(M^{(A)})))\cdot I_{d}$.
With $\BarA$ generated, we add a noise matrix $E(x)\in[-1,1]^{d\times d}$
to $\BarA$ to obtain $A(x)$, i.e., $A(x)=\BarA+E(x)$. As $\E_{\pi}[E(x)]=0$,
we only generate $E(x)$ with $e(x)_{ij}\overset{\text{i.i.d.\ }}{\sim}U[-1,1]$
for $x=1,\ldots,n-1$, and set $A(n)=\BarA-\sum_{x=1}^{n-1}\pi_{x}E(x)$.

Lastly, we generate $b(x)\in\R^{d}$ with $b(x)_{i}\overset{\text{i.i.d.\ }}{\sim}[-1,1]$
and obtain $\bar{b}=\sum_{x}\pi_{x}b(x)$ and $\bmax=\max_{x}\|b(x)\|$.

\subsection{Inference with LSA iterates with diminishing stepsize}
\label{sec:inf-proce-dimin}

In our paper, we compare our RR extrapolated constant stepsize inference procedure with the classical diminishing stepsize sequence. In our experiments, we follow the batch-mean inference procedure with diminishing stepsize outlined in \cite{ChenXi2020}.
The procedure with diminishing stepsize sequence itself is nearly the same as what we delineated for constant stepsizes in Section~\ref{sec:inference-proc}. Given a single trajectory of the Markov chain $(x_k)_{k\geq0}$, we run LSA with constant stepsizes and obtain a corresponding trajectory of $(\theta_k)_{k\geq0}$ iterates. The $(\theta_k)_{k\geq0}$ iterates are divided into batches for batch-mean estimation, with the first batch considered as a burn-in period and discarded. 

The major difference between the procedure of diminishing stepsize and that of constant stepsize is that the batch length used in the diminishing stepsize approach is not fixed but rather increases exponentially in length as the stepsize decreases, so as to overcome the increasing correlation among consecutive iterates. In our experiments, we follow the batch size selection guidance outlined in \cite{ChenXi2020}.
Given the total number of iterations $T$ and the total number of batches $K$, the iterates are divided into batches with index $e_k$ marking the end of batch $k$. The end indices $e_k$ are computed as
\begin{equation}
    e_k=\Big((k+1)r\Big)^{\frac{1}{1-\alpha}},\quad k=0,\ldots, K,
\end{equation}
with $r$ being a decorrelation strength factor computed as $r=\frac{T^{1-\alpha}}{K+1}$.
Since LSA iterations with diminishing stepsize converge almost surely, we do not additionally apply RR extrapolation to the inference procedure.

\subsection{Baseline Comparison for i.i.d.\ LSA}
\label{sec:baseline-iid}
Both of the two inference procedures, either with constant stepsize or diminishing stepsize, are originally proposed for SA with i.i.d.\ data \citep{li2017-constantine, ChenXi2020, XieZhang_SAInference_pku}. However, there has not been any cross-study across these two regimes, so we first examine the performance of these two different inference regimes under i.i.d.\ LSA to serve as a baseline for comparison with inference with Markovian data. 

We examine 100 different LSA problems of the same dimension with $|\cX|=10$ and $d=5$. For each LSA problem, the parameters $(P,A,b)$ are generated randomly. For each set of $(P, A, b)$ LSA problem, we run 100 different realized trajectories $(x_k)_{k\ge0}$ of length $10^5$ and run LSA iterates with the above-described stepsize regimes and perform statistical inference respectively. 
As the asymptotic bias is zero for LSA with constant stepsizes and i.i.d.\ data, we do not use RR extrapolation in this set of experiments when inference with constant stepsizes. 
The number of batches for both constant and diminishing stepsize used here is carefully set at $50$, which is at the recommended level for diminishing stepsize discussed in \cite{ChenXi2020}. 

We record the percentiles for $\ell_2$ error, CI width, and coverage percentage across the 100 different setups, with key statistics presented in Table~\ref{tab:quantile table iid}. Note that the confidence intervals and coverage probabilities data recorded in all tables focus on the 1st coordinate only. 

The diminishing stepsize sequence starting from $0.02$ converges too slowly, and hence the inference performance is consistently the worst across all three aspects out of the four stepsize sequences. For the remaining three stepsize sequences, the coverage properties are generally comparable, but the constant stepsizes always outperform the $0.2/k^{0.5}$ diminishing stepsize by a slight margin. 
We next zoom into the 25\%--75\% percentile performance and note that constant stepsizes are slightly better. While the $\ell_2$ errors are relatively similar in scale, constant stepsizes have better coverage probabilities which can be explained by the wider confidence intervals. This suggests that inference with constant stepsize iterates gives better covariance estimates, even under i.i.d.\ data. Nonetheless, the median confidence interval coverage probabilities for two constant stepsizes is around 95\%, which is around its targeted coverage probability.  

\begin{table}[htbp]
    \centering
    \begin{tabular}{ |c|c|c|c|c|c| } 
        \hline
        \multicolumn{2}{|c|}{\multirow{2}{*}{Percentile (\%)} }&\multicolumn{4}{|c|}{Comparison table} \\
        \cline{3-6}
        \multicolumn{2}{|c|}{}& 0.2 & 0.02  & $0.2/\sqrt{k}$ & $0.02/\sqrt{k}$\\ 
        \hline
        \multirow{3}{*}{10} & $\ell_2$ & 0.98 & 0.83   & 0.89 & 0.91\\ 
        & CI & 1.48 & 1.33  & 1.22 & 0.90\\ 
        & Cov & 91 & 91 & 86 & 58\\ 
        \hline
        \multirow{3}{*}{25} & $\ell_2$ & 1.16 & 1.07 & 1.06& 1.12\\ 
        & CI & 1.81 & 1.61  & 1.54 & 1.17\\ 
        & Cov & 92 & 92 &89 & 70\\ 
        \hline
        \multirow{3}{*}{50} & $\ell_2$ & 1.47 & 1.31  & 1.36 & 1.45\\ 
        & CI & 2.70 & 2.38  & 2.16 & 1.55\\ 
        & Cov & 94 & 94  & 92&77\\ 
        \hline
         \multirow{3}{*}{75} & $\ell_2$ & 2.21 &1.94 &1.84& 2.09\\ 
        & CI &4.086 & 3.33 & 2.99& 2.54\\ 
        & Cov  & 95 & 95  & 93& 82\\ 
        \hline
        \multirow{3}{*}{90} & $\ell_2$ & 5.28 &3.17  & 3.37 & 7.31\\ 
        & CI& 9.35 & 6.30  & 5.27 & 4.79\\ 
        & Cov & 90 & 96 & 95& 89\\ 
        \hline
    \end{tabular}
    \caption{Inference comparison of different stepsize regimes under i.i.d.\ Data.
    $\ell_2$ and ``CI" values are of unit $10^{-3}$. Both CI width and coverage probability listed in the table is for the 1-st coordinate estimate only.
    }
    \label{tab:quantile table iid}
\end{table}

\newpage
\subsection{Setup for Bootstrapping Experiments}
\label{sec:bootstrap-details}
Recall that our goal is to find $\theta^\ast$ that solves
\begin{equation*}
    \E_{x\sim\pi}[A(x)]\theta+\E_{x\sim\pi}[b(x)]=0,
\end{equation*}
where $(x_t)_{t\geq0}$ is a Markov chain and $\pi$ is its stationary distribution.

In this work, we assume that we only have access to Markovian underlying data $(x_t)_{t\geq0}$.
Therefore, we first simulate a trajectory of the Markov chain of length $10^6$, and we store the entire trajectory. We also shuffle the data to mimic an i.i.d.\ sampling paradigm. Next, we sample with replacement a batch size of $10^4$ from the stored states, i.e., $\{x_{(1)},\ldots, x_{(10^4)}\}$. Using these sampled states, we obtain estimates of $\BarA$ and $\Bar{b}$,
\begin{equation*}
    \hat{A}=\frac{1}{10^4}\sum_{k=1}^{10^4}A(x_{(k)}),\quad\text{and}\quad \hat{b}=\frac{1}{10^4}\sum_{k=1}^{10^4}b(x_{(k)}).
\end{equation*}
As such, for this batch, we obtain a point estimate
\begin{equation*}
    \hat\theta=\hat{A}^{-1}\hat{b}.
\end{equation*}

Next, we repeat the above sampling step 500 times and obtain $\{\hat\theta_{(i)}\}_{i=1}^{500}$. We can then estimate $\bar\theta$ and $\bar\Sigma_\theta$ from the 500 point estimates of $\theta^\ast$,
and subsequently construct the confidence interval as
\begin{equation*}
    \Bigg[\bar\theta_j-z_{1-\frac{q}{2}}\sqrt{\frac{\hat\Sigma_{j,j}}{K(n-n_0)}}, \,\bar\theta_j+z_{1-\frac{q}{2}}\sqrt{\frac{\hat\Sigma_{j,j}}{K(n-n_0)}}\Bigg],
\end{equation*}
where $\bar\theta_j$ denotes the $j$-th coordinate of $\bar\theta:=\frac{1}{500}\sum_{i=1}^{500}\hat\theta_{(i)}$, and $\hat\Sigma_{j,j}$ refers to the $(j,j)$-th entry of the estimated covariance matrix $\hat{\Sigma}:=\frac{1}{500}\sum_{i=1}^{500}(\hat\theta_{(i),j}-\bar\theta_j)^2$.

\newpage
\subsection{QQ Plots to Verify Normality}
In this last section, we conduct experiments to examine the normality of the averaged iterates of Markovian LSA under constant stepsize and provide quantile-quantile (QQ) plots to visually verify our CLT results. 

We examine a Markovian LSA with $|\cX|=5$ and $d=3$. We then run 1000 different realized trajectories $(x_k)_{k\geq0}$ of length $10^5$ and run LSA iterates with the above-described stepsize regimes with the initial burn-in period selected as $b=100$ and zero discard length, i.e., $n_0=0$, in between each batch, and a constant stepsize $\alpha=0.2$ for this experiment. We focus on the point estimate $\bar\theta$ obtained from the $1000$ independent runs, and generate quantile-quantile plots for each coordinate against a fitted normal distribution.
These plots are included below.

\begin{figure}[ht]

\centering
\includegraphics[width=.3\textwidth]{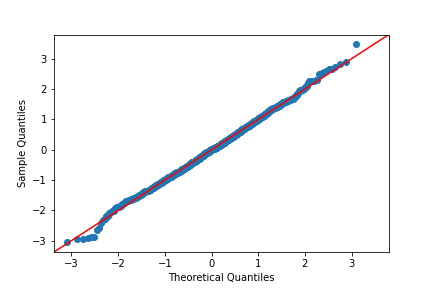}\hfill
\includegraphics[width=.3\textwidth]{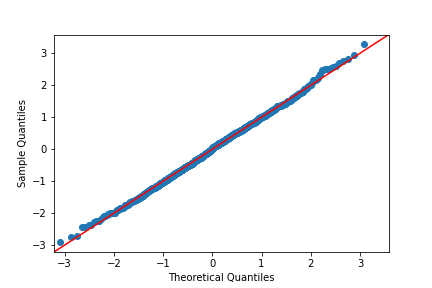}\hfill
\includegraphics[width=.3\textwidth]{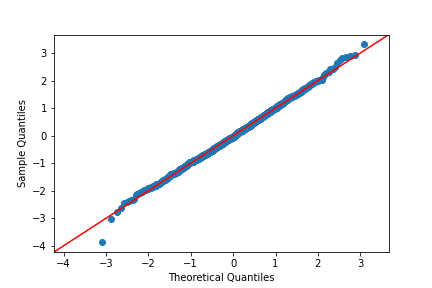}

\caption{QQ Plots for Three Coordinates.}
\label{fig:qqplots}

\end{figure}

These plots provide a clear visual confirmation that the averaged LSA iterates adhere to a normal distribution, which aligns with our CLT Theorem~\ref{thm:CLT}.

\end{document}